\relax
\documentclass[letterpaper]{article} 
\usepackage{aaai22}  
\usepackage{times}  
\usepackage{helvet}  
\usepackage{courier}  
\usepackage[hyphens]{url}  
\usepackage{graphicx} 
\urlstyle{rm} 
\usepackage{natbib}  
\usepackage{caption} 
\DeclareCaptionStyle{ruled}{labelfont=normalfont,labelsep=colon,strut=off} 
\frenchspacing  
\setlength{\pdfpagewidth}{8.5in}  
\setlength{\pdfpageheight}{11in}  
%
\usepackage{algorithm}
\usepackage{algpseudocode}

\usepackage{bbm}
\usepackage{amsmath}
\usepackage{relsize}
\usepackage{multirow}
\usepackage{booktabs}
\usepackage{threeparttable}
\usepackage{subcaption}
\usepackage{stmaryrd}
\usepackage{longtable}

\usepackage{amsthm}

\newtheorem{theorem}{Theorem}[section]
\newtheorem{lemma}[theorem]{Lemma}


%
\usepackage{newfloat}
\usepackage{listings}
\lstset{%
	basicstyle={\footnotesize\ttfamily},
	numbers=left,numberstyle=\footnotesize,xleftmargin=2em,
	aboveskip=0pt,belowskip=0pt,%
	showstringspaces=false,tabsize=2,breaklines=true}
\floatstyle{ruled}
\newfloat{listing}{tb}{lst}{}
\floatname{listing}{Listing}
%
%
\pdfinfo{
/Title (TS2Vec: Towards Universal Representation of Time Series)
/Author (Zhihan Yue, Yujing Wang, Juanyong Duan, Tianmeng Yang, Congrui Huang, Yunhai Tong, Bixiong Xu)
/TemplateVersion (2022.1)
}

\setcounter{secnumdepth}{2} 

%



\title{TS2Vec: Towards Universal Representation of Time Series}
\author {
    Zhihan Yue,\textsuperscript{\rm 1,2}
    Yujing Wang,\textsuperscript{\rm 1,2}
    Juanyong Duan,\textsuperscript{\rm 2}
    Tianmeng Yang,\textsuperscript{\rm 1,2}\\
    Congrui Huang,\textsuperscript{\rm 2}
    Yunhai Tong,\textsuperscript{\rm 1}
    Bixiong Xu\textsuperscript{\rm 2}
}
\affiliations {
    \textsuperscript{\rm 1} Peking University, 
    \textsuperscript{\rm 2} Microsoft\\
    \{zhihan.yue,youngtimmy,yhtong\}@pku.edu.cn\\ \{yujwang,juanyong.duan,conhua,bix\}@microsoft.com
}

\usepackage{bibentry}

\begin{document}

\maketitle

\begin{abstract}
This paper presents TS2Vec, a universal framework for learning representations of time series in an \textit{arbitrary semantic level}. Unlike existing methods, TS2Vec performs contrastive learning in a \textit{hierarchical} way over \textit{augmented context} views, which enables a robust contextual representation for each timestamp. Furthermore, to obtain the representation of an arbitrary sub-sequence in the time series, we can apply a simple aggregation over the representations of corresponding timestamps. We conduct extensive experiments on time series classification tasks to evaluate the quality of time series representations. As a result, TS2Vec achieves significant improvement over existing SOTAs of unsupervised time series representation on 125 UCR datasets and 29 UEA datasets. The learned timestamp-level representations also achieve superior results in time series forecasting and anomaly detection tasks. A linear regression trained on top of the learned representations outperforms previous SOTAs of time series forecasting. Furthermore, we present a simple way to apply the learned representations for unsupervised anomaly detection, which establishes SOTA results in the literature. The source code is publicly available at \url{https://github.com/yuezhihan/ts2vec}.
\end{abstract}

\section{Introduction}

Time series plays an important role in various industries such as financial markets, demand forecasting, and climate modeling. Learning universal representations for time series is a fundamental but challenging problem. Many studies~\cite{TNC,TLoss,RWS} focused on learning \textit{instance-level} representations, which described the whole segment of the input time series and have showed great success in tasks like clustering and classification. In addition, recent works \cite{TSTCC,TLoss} employed the contrastive loss to learn the inherent structure of time series. However, there are still notable limitations in existing methods.

First, instance-level representations may not be suitable for tasks that need fine-grained representations, for example, time series forecasting and anomaly detection. In such kinds of tasks, one needs to infer the target at a specific timestamp or sub-series, while a coarse-grained representation of the whole time series is insufficient to achieve satisfied performance.

Second, few of the existing methods distinguish the multi-scale contextual information with different granularities. For example, TNC~\cite{TNC} discriminates among segments with a constant length. T-Loss~\cite{TLoss} uses random sub-series from the original time series as positive samples. However, neither of them featurizes time series at different scales to capture scale-invariant information, which is essential to the success of time series tasks. Intuitively, multi-scale features may provide different levels of semantics and improve the generalization capability of learned representations.

Third, most existing methods of unsupervised time series representation are inspired by experiences in CV and NLP domains, which have strong inductive bias such as transformation-invariance and cropping-invariance. However, those assumptions are not always applicable in modeling time series.
For example, cropping is a frequently used augmentation strategy for images. However, the distributions and semantics of time series may change over time, and a cropped sub-sequence is likely to have a distinct distribution against the original time series.


To address these issues, this paper proposes a universal contrastive learning framework called TS2Vec, which enables the representation learning of time series in \textit{all semantic levels}. It \textit{hierarchically} discriminates positive and negative samples at instance-wise and temporal dimensions; and for an arbitrary sub-series, its overall representation can be obtained by a max pooling over the corresponding timestamps. This enables the model to capture contextual information at multiple resolutions for the temporal data and generate fine-grained representations for any granularity. Moreover, the contrasting objective in TS2Vec is based on \textit{augmented context} views, that is, representations of the same sub-series in two augmented contexts should be consistent. In this way, we obtain a robust contextual representation for each sub-series without introducing unappreciated inductive bias like transformation- and cropping-invariance. 
\begin{figure*}[!htb]
  \centering
    \includegraphics[width=0.72\linewidth]{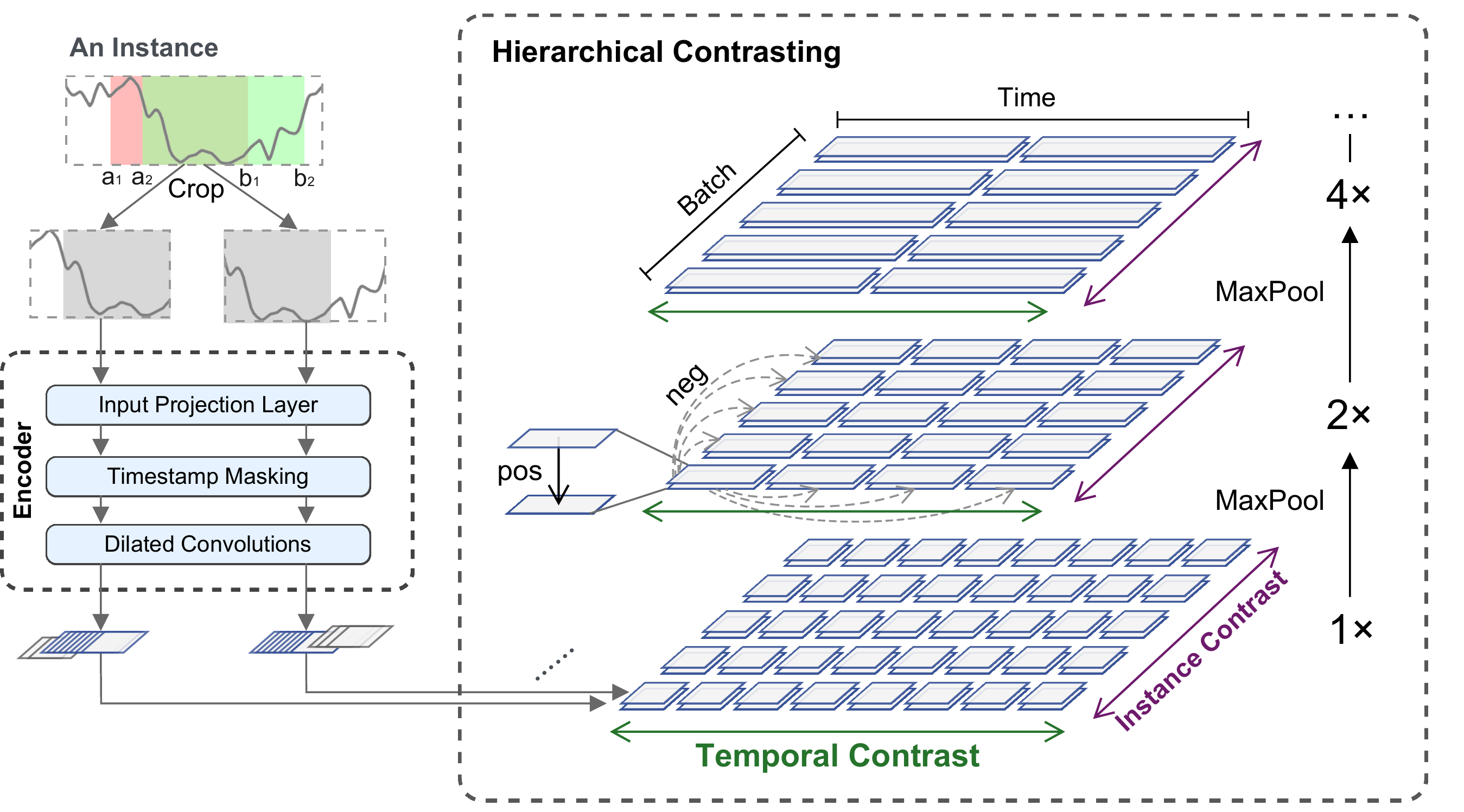}
  \caption{The proposed architecture of TS2Vec. Although this figure shows a univariate time series as the input example, the framework supports multivariate input. Each parallelogram denotes the representation vector on a timestamp of an instance.}\label{img:arch}
\end{figure*}

We conduct extensive experiments on multiple tasks to prove the effectiveness of our method. The results of time series classification, forecasting and anomaly detection tasks validate that the learned representations of TS2Vec are general and effective. 

The major contributions of this paper are summarized as follows:
\begin{itemize}
    \item We propose TS2Vec, a unified framework that learns contextual representations for arbitrary sub-series at various semantic levels. To the best of our knowledge, this is the first work that provides a flexible and universal representation method for all kinds of tasks in the time series domain, including but not limited to time series classification, forecasting and anomaly detection. 
     \item To address the above goal, we leverage two novel designs in the constrictive learning framework. First, we use a \textit{hierarchical contrasting} method in both instance-wise and temporal dimensions to capture multi-scale contextual information. Second, we propose \textit{contextual consistency} for positive pair selection. Different from previous state-of-the-arts, it is more suitable for time series data with diverse distributions and scales. Extensive analyses demonstrate the robustness of TS2Vec for time series with missing values, and the effectiveness of both hierarchical contrasting and contextual consistency are verified by ablation study.
    \item TS2Vec outperforms existing SOTAs on three benchmark time series tasks, including classification, forecasting, and anomaly detection. For example, our method improves an average of 2.4\% accuracy on 125 UCR datasets and 3.0\% on 29 UEA datasets compared with the best SOTA of unsupervised representation on classification tasks.
\end{itemize}

\section{Method}

\subsection{Problem Definition}
Given a set of time series $\mathcal{X}=\{x_1, x_2, \cdots, x_N\}$ of $N$ instances, the goal is to learn a nonlinear embedding function $f_\theta$ that maps each $x_i$ to its representation $r_i$ that best describes itself. The input time series $x_i$ has dimension $T\times F$, where $T$ is the sequence length and $F$ is the feature dimension. The representation $r_i=\{r_{i,1}, r_{i,2}, \cdots, r_{i,T}\}$ contains representation vectors $r_{i,t}\in\mathbbm{R}^K$ for each timestamp $t$, where $K$ is the dimension of representation vectors.


\subsection{Model Architecture}
The overall architecture of TS2Vec is shown in Figure~\ref{img:arch}. We randomly sample two overlapping subseries from an input time series $x_i$, and encourage consistency of contextual representations on the common segment. Raw inputs are fed into the encoder which is optimized jointly with temporal contrastive loss and instance-wise contrastive loss. The total loss is summed over multiple scales in a hierarchical framework.

The encoder $f_\theta$ consists of three components, including an input projection layer, a timestamp masking module, and a dilated CNN module. For each input $x_i$, the input projection layer is a fully connected layer that maps the observation $x_{i,t}$ at timestamp $t$ to a high-dimensional latent vector $z_{i,t}$. The timestamp masking module masks latent vectors at randomly selected timestamps to generate an augmented context view. Note that we mask latent vectors rather than raw values because the value range for time series is possibly unbounded and it is impossible to find a special token for raw data. We will further prove the feasibility of this design in the appendix. 

A dilated CNN module with ten residual blocks is then applied to extract the contextual representation at each timestamp. Each block contains two 1-D convolutional layers with a dilation parameter ($2^l$ for the $l$-th block). The dilated convolutions enable a large receptive field for different domains~\cite{TCN}. In the experimental section, we will demonstrate its effectiveness on various tasks and datasets.

\begin{figure}
  \centering
  \includegraphics[width=\linewidth]{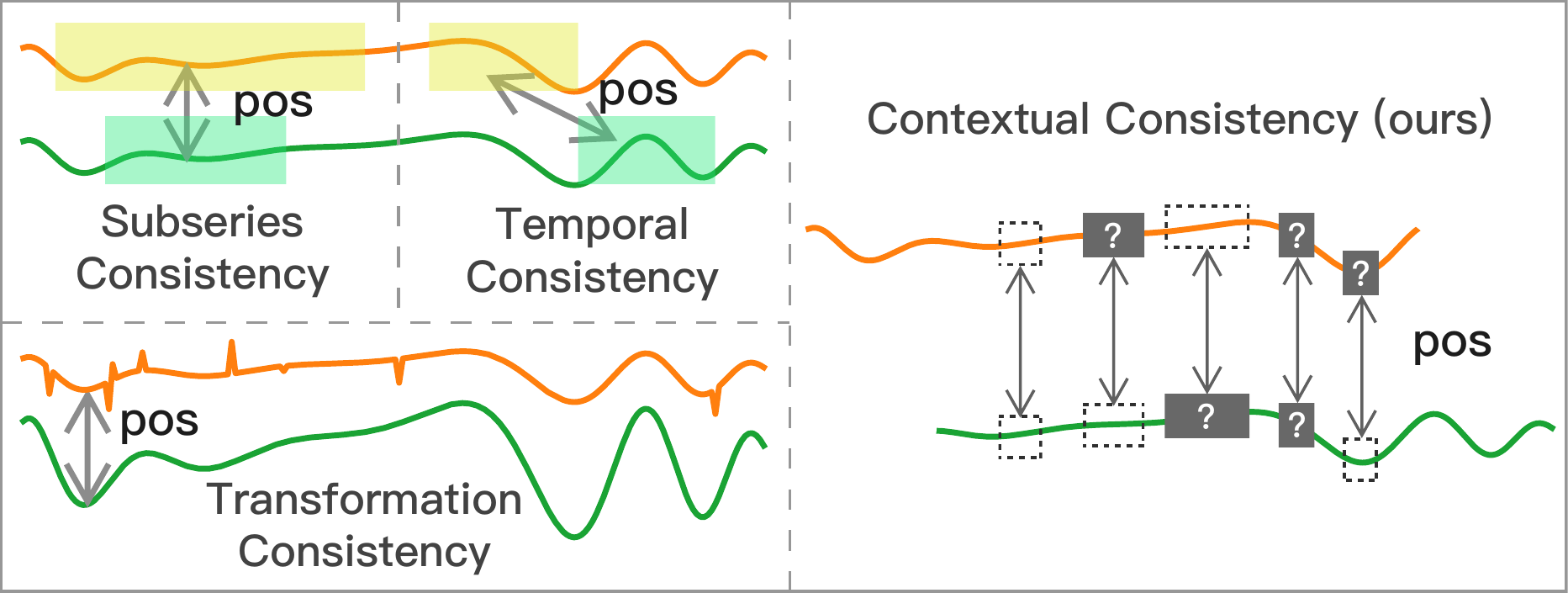}
  \caption{Positive pair selection strategies.}\label{fig:pos-pair}
\end{figure}

\begin{figure}
  \begin{subfigure}{0.49\linewidth}
    \includegraphics[width=\linewidth]{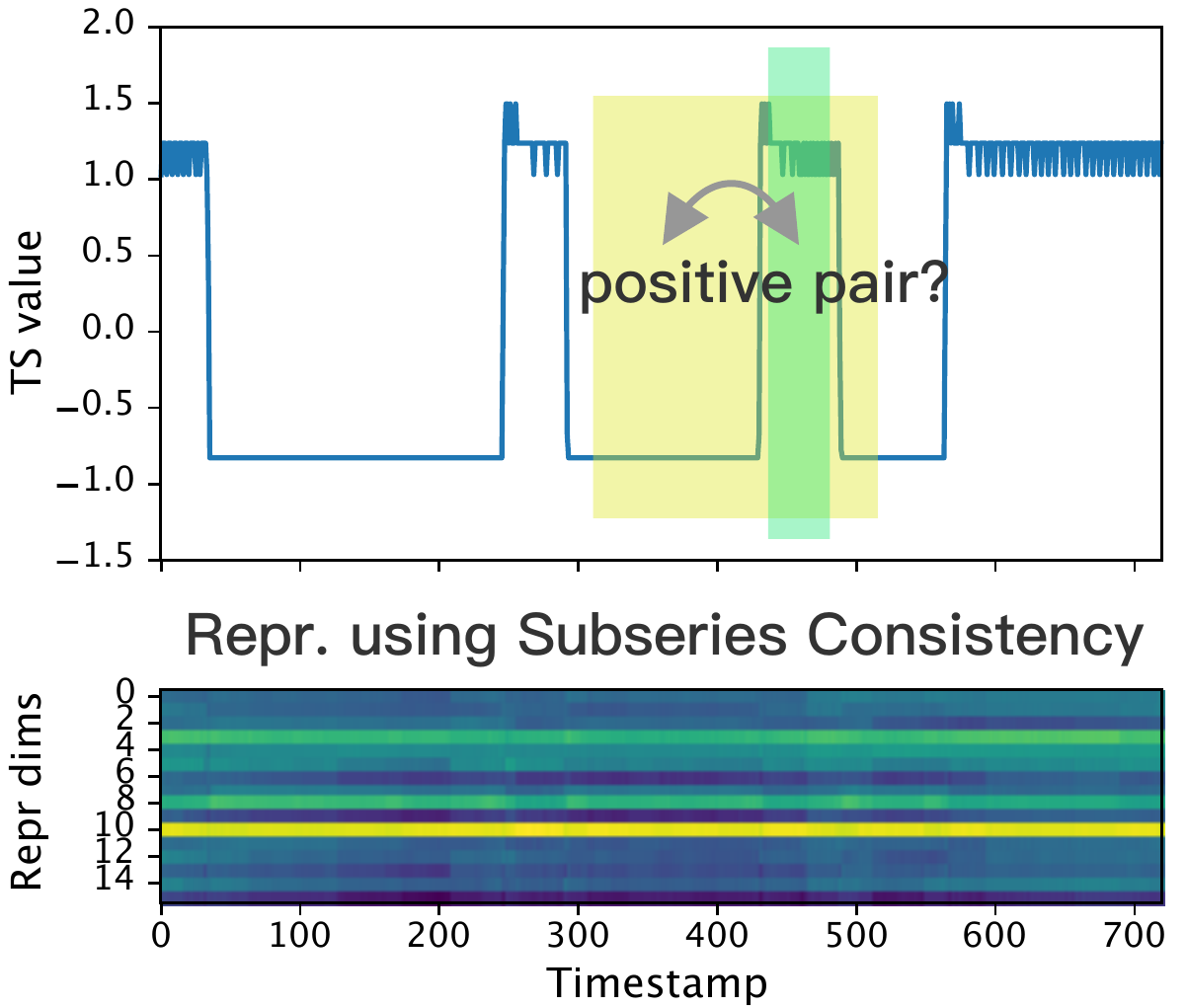}
    \caption{Level shifts.} \label{fig:case-level-shift}
  \end{subfigure}\hfill
  \begin{subfigure}{0.49\linewidth}
    \includegraphics[width=\linewidth]{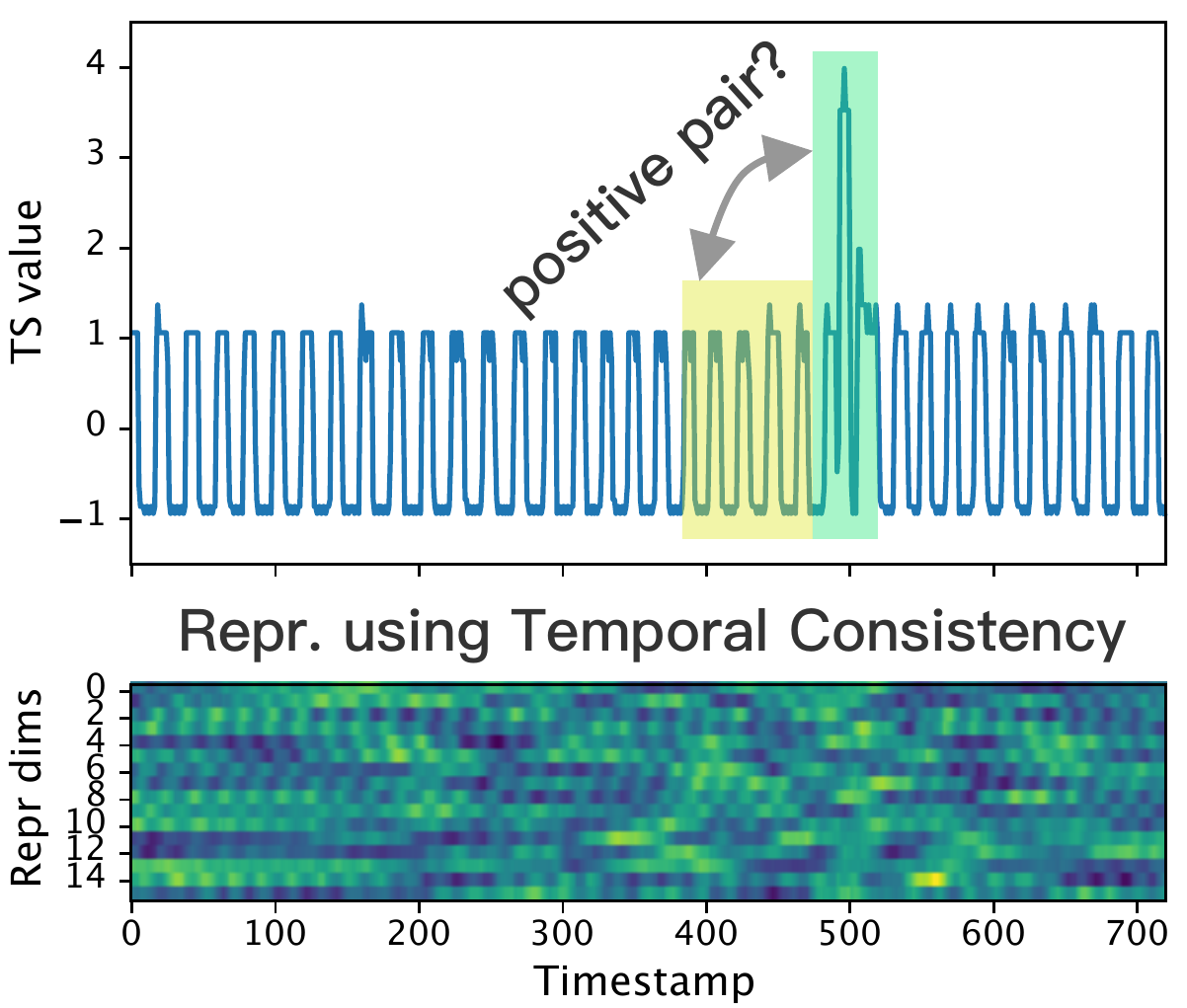}
    \caption{Anomalies.} \label{fig:case-anomaly}
  \end{subfigure}%
  \caption{Two typical cases of the distribution change of time series, with the heatmap visualization of the learned representations over time using subseries consistency and temporal consistency respectively.}\label{fig:pos-vis}
\end{figure}

\subsection{Contextual Consistency}\label{section:consistency}

The construction of positive pairs is essential in contrastive learning. Previous works have adopted various selection strategies (Figure~\ref{fig:pos-pair}), which are summarized as follows:
\begin{itemize}
    \item \emph{Subseries consistency}~\cite{TLoss} encourages the representation of a time series to be closer its sampled subseries. 
    \item \emph{Temporal consistency}~\cite{TNC} enforces the local smoothness of representations by choosing adjacent segments as positive samples.
    \item \emph{Transformation consistency}~\cite{TSTCC} augments input series by different transformations, such as scaling, permutation, etc., encouraging the model to learn transformation-invariant representations.
\end{itemize}

However, the above strategies are based on strong assumptions of data distribution and may be not appropriate for time series data. 
For example, subseries consistency is vulnerable when there exist level shifts (Figure~\ref{fig:case-level-shift}) and temporal consistency may introduce false positive pair when anomalies occur (Figure~\ref{fig:case-anomaly}). In these two figures, the green and yellow parts have different patterns, but previous strategies consider them as similar ones. To overcome this issue, we propose a new strategy, \textit{contextual consistency}, which treats the representations at the same timestamp in two augmented contexts as positive pairs. A context is generated by applying timestamp masking and random cropping on the input time series. The benefits are two-folds. First, masking and cropping do not change the magnitude of the time series, which is important to time series. 
Second, they also improve the robustness of learned representations by forcing each timestamp to reconstruct itself in distinct contexts.

\paragraph{Timestamp Masking} We randomly mask the timestamps of an instance to produce a new context view. Specifically, it masks the latent vector $z_i=\{z_{i,t}\}$ after the Input Projection Layer along the time axis with a binary mask $m\in\{0, 1\}^T$, the elements of which are independently sampled from a Bernoulli distribution with $p=0.5$. The masks are independently sampled in every forward pass of the encoder. 


\paragraph{Random Cropping}
Random cropping is also adopted to generate new contexts. For any time series input $x_i \in \mathbbm{R}^{T\times F}$, TS2Vec randomly samples two overlapping time segments $[a_1, b_1]$, $[a_2, b_2]$ such that $0 < a_1 \leq a_2 \leq b_1 \leq b_2 \leq T$. The contextual representations on the overlapped segment $[a_2, b_1]$ should be consistent for two context reviews. We show in the appendix that random cropping helps learn position-agnostic representations and avoids representation collapse.
Timestamp masking and random cropping are only applied in the training phase. 

\subsection{Hierarchical Contrasting}\label{section:hier}

In this section, we propose the hierarchical contrastive loss that forces the encoder to learn representations at various scales. The steps of calculation is summarized in Algorithm~\ref{hier-algo}. Based on the timestamp-level representation, we apply max pooling on the learned representations along the time axis and compute Equation \ref{contrastive-loss} recursively. Especially, the contrasting at top semantic levels enables the model to learn instance-level representations.


\begin{algorithm}[ht]
\caption{Calculating the hierarchical contrastive loss}\label{hier-algo}
\begin{algorithmic}[1]
\Procedure{HierLoss}{$r,r'$}
\State $\mathcal{L}_{hier} \gets \mathcal{L}_{dual}(r, r')$;
\State $d \gets 1$;
\While{$\mathrm{time\underline{~~}{length}}(r) > 1$}
\State // \textit{The maxpool1d operates along the time axis.}
\State $r \gets \mathrm{maxpool1d}(r, \mathrm{kernel\_size}=2)$;
\State $r' \gets \mathrm{maxpool1d}(r', \mathrm{kernel\_size}=2)$;
\State $\mathcal{L}_{hier} \gets \mathcal{L}_{hier} + \mathcal{L}_{dual}(r, r')$ ;
\State $d \gets d + 1$ ;
\EndWhile
\State $\mathcal{L}_{hier} \gets \mathcal{L}_{hier} / d$ ;
\State \textbf{return} $\mathcal{L}_{hier}$
\EndProcedure
\end{algorithmic}
\end{algorithm}

The hierarchical contrasting method enables a more comprehensive representation than previous works. For example, T-Loss~\cite{TLoss} performs instance-wise contrasting only at the instance level; TS-TCC~\cite{TSTCC} applies instance-wise contrasting only at the timestamp level; TNC~\cite{TNC} encourages temporal local smoothness in a specific level of granularity. These works do not encapsulate representations in different levels of granularity like TS2Vec.

\begin{table*}
  \centering
  \scalebox{0.9}{
  \begin{tabular}{lcccccc}
  \toprule
     & \multicolumn{3}{c}{125 UCR datasets} & \multicolumn{3}{c}{29 UEA datasets} \\
     \cmidrule(r){2-4} \cmidrule(r){5-7}
    Method & Avg. Acc. & Avg. Rank & Training Time (hours) & Avg. Acc. & Avg. Rank & Training Time (hours) \\
    \midrule
    DTW & 0.727 & 4.33 & -- & 0.650 & 3.74 & -- \\
    TNC & 0.761 & 3.52 & 228.4 & 0.677 & 3.84 & 91.2 \\
    TST & 0.641 & 5.23 & 17.1 & 0.635 & 4.36 & 28.6 \\
    TS-TCC & 0.757 & 3.38 & 1.1 & 0.682 & 3.53 & 3.6 \\
    T-Loss & 0.806 & 2.73 & 38.0 & 0.675 & 3.12 & 15.1 \\
    TS2Vec & \textbf{0.830 (+2.4\%)} & \textbf{1.82} & \textbf{0.9} & \textbf{0.712 (+3.0\%)} & \textbf{2.40} & \textbf{0.6}\\
    \bottomrule
  \end{tabular}
  }
  \caption{Time series classification results compared to other time series representation methods. The representation dimensions of TS2Vec, T-Loss, TS-TCC, TST and TNC are all set to 320 and under SVM evaluation protocol for fair comparison.}
  \label{cls-perf}
\end{table*}



To capture contextual representations of time series, we leverage both instance-wise and temporal contrastive losses jointly to encode time series distribution. The loss functions are applied to all granularity levels in the hierarchical contrasting model.

\paragraph{Temporal Contrastive Loss} To learn discriminative representations over time, TS2Vec takes the representations at the same timestamp from two views of the input time series as positives, while those at different timestamps from the same time series as negatives. Let $i$ be the index of the input time series sample and $t$ be the timestamp. Then $r_{i, t}$ and $r'_{i, t}$ denote the representations for the same timestamp $t$ but from two augmentations of $x_i$. The temporal contrastive loss for the $i$-th time series at timestamp $t$ can be formulated as
\begin{equation}
\mathsmaller{\ell_{temp}^{(i, t)}\!=\!-\!\log\!\frac{\exp(r_{i,t}\cdot r'_{i,t})}{\sum_{t'\in\Omega} \left(\exp(r_{i,t} \cdot r'_{i,t'}) + \mathbbm{1}_{[t\neq t']} \exp(r_{i,t} \cdot r_{i,t'})\right)},}
\end{equation}
where $\Omega$ is the set of timestamps within the overlap of the two subseries, and $\mathbbm{1}$ is the indicator function. 

\paragraph{Instance-wise Contrastive Loss} 

The instance-wise contrastive loss indexed with $(i,t)$ can be formulated as
\begin{equation}
\mathsmaller{\ell_{inst}^{(i,t)} = - \log \frac{\exp(r_{i,t} \cdot r'_{i,t})}{\sum_{j=1}^B \left(\exp(r_{i,t} \cdot r'_{j,t}) + \mathbbm{1}_{[i\neq j]} \exp(r_{i,t} \cdot r_{j,t})\right)},}
\end{equation}
where $B$ denotes the batch size. We use representations of other time series at timestamp $t$ in the same batch as negative samples.

The two losses are complementary to each other. For example, given a set of electricity consumption data from multiple users, instance contrast may learn the user-specific characteristics, while temporal contrast aims to mine the dynamic trends over time. The overall loss is defined as
\begin{equation}\label{contrastive-loss}
    \mathcal{L}_{dual} = \frac{1}{NT}\sum_{i}\sum_{t}\left(\ell_{temp}^{(i,t)} + \ell_{inst}^{(i,t)}\right).
\end{equation}

\section{Experiments}
\label{experiment}

In this section, we evaluate the learned representations of TS2Vec on time series classification, forecasting, and anomaly detection. Detailed experimental settings are presented in the appendix. 








\begin{figure}
  \centering
    \includegraphics[width=0.98\linewidth]{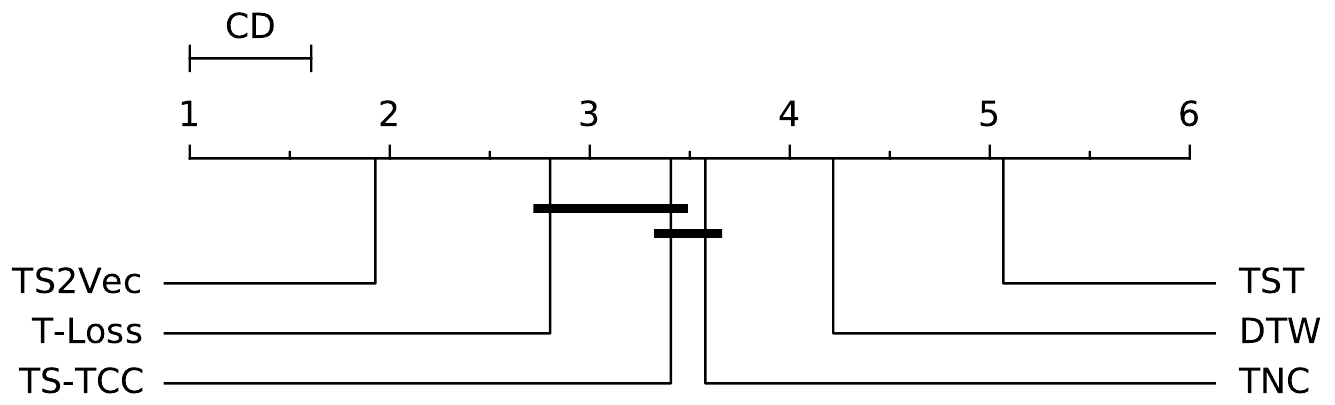}
  \caption{Critical Difference (CD) diagram of representation learning methods on time series classification tasks with a confidence level of 95\%.} \label{fig:rank}
\end{figure}

\subsection{Time Series Classification}
For classification tasks, the classes are labeled on the entire time series (instance). Therefore we require the instance-level representations, which can be obtained by max pooling over all timestamps. We then follow the same protocol as T-Loss~\cite{TLoss} where an SVM classifier with RBF kernel is trained on top of the instance-level representations to make predictions. 

We conduct extensive experiments on time series classification to evaluate the instance-level representations, compared with other SOTAs of unsupervised time series representation, including T-Loss, TS-TCC~\cite{TSTCC}, TST~\cite{TST} and TNC~\cite{TNC}. The UCR archive~\cite{UCRArchive2018} and UEA archive~\cite{UEAArchive} are adopted for evaluation. There are 128 univariate datasets in UCR and 30 multivariate datasets in UEA. Note that TS2Vec works on all UCR and UEA datasets, and full results of TS2Vec on all datasets are provided in the appendix.

The evaluation results are summarized in Table~\ref{cls-perf}. TS2Vec achieves substantial improvement compared to other representation learning methods on both UCR and UEA datasets. In particular, TS2Vec improves an average of 2.4\% classification accuracy on 125 UCR datasets and 3.0\% on 29 UEA datasets. Critical Difference diagram~\cite{CDDiagram} for Nemenyi tests on all datasets (including 125 UCR and 29 UEA datasets) is presented in Figure~\ref{fig:rank}, where classifiers that are not connected by a bold line are significantly different in average ranks. This validates that TS2Vec significantly outperforms other methods in average ranks. As mentioned in section \ref{section:consistency} and \ref{section:hier}, T-Loss, TS-TCC and TNC perform contrastive learning at only a certain level and impose strong inductive bias, such as transformation-invariance, to select positive pairs. TS2Vec applies hierarchical contrastive learning at different semantic levels, thus achieves better performance.



Table~\ref{cls-perf} also shows the total training time of representation learning methods with an NVIDIA GeForce RTX 3090 GPU. Among these methods, TS2Vec provides the shortest training time. Because TS2Vec applies contrastive losses across different granularities in one batch, the efficiency of representation learning has been greatly improved.

\begin{table}
  \centering
  \scalebox{0.8}{
  \setlength\tabcolsep{3pt}
  \begin{tabular}{lccccccccccccc}
  \toprule
    Dataset & H & TS2Vec & Informer & LogTrans & N-BEATS & TCN & LSTnet \\
    \midrule
    \multirow{5}*{ETTh$_1$}
    & 24 & \textbf{0.039} & 0.098 & 0.103 & 0.094 & 0.075 & 0.108 \\
    & 48 & \textbf{0.062} & 0.158 & 0.167 & 0.210 & 0.227 & 0.175 \\
    & 168 & \textbf{0.134} & 0.183 & 0.207 & 0.232 & 0.316 & 0.396 \\
    & 336 & \textbf{0.154} & 0.222 & 0.230 & 0.232 & 0.306 & 0.468 \\
    & 720 & \textbf{0.163} & 0.269 & 0.273 & 0.322 & 0.390 & 0.659 \\
    
    \midrule
    
    \multirow{5}*{ETTh$_2$}
    & 24 & \textbf{0.090} & 0.093 & 0.102 & 0.198 & 0.103 & 3.554 \\
    & 48 & \textbf{0.124} & 0.155 & 0.169 & 0.234 & 0.142 & 3.190 \\
    & 168 & \textbf{0.208} & 0.232 & 0.246 & 0.331 & 0.227 & 2.800 \\
    & 336 & \textbf{0.213} & 0.263 & 0.267 & 0.431 & 0.296 & 2.753 \\
    & 720 & \textbf{0.214} & 0.277 & 0.303 & 0.437 & 0.325 & 2.878 \\
    
    \midrule
    
    \multirow{5}*{ETTm$_1$}
    & 24 & \textbf{0.015} & 0.030 & 0.065 & 0.054 & 0.041 & 0.090  \\
    & 48 & \textbf{0.027} & 0.069 & 0.078 & 0.190 & 0.101 & 0.179 \\
    & 96 & \textbf{0.044} & 0.194 & 0.199 & 0.183 & 0.142 & 0.272 \\
    & 288 & \textbf{0.103} & 0.401 & 0.411 & 0.186 & 0.318 & 0.462 \\
    & 672 & \textbf{0.156} & 0.512 & 0.598 & 0.197 & 0.397 & 0.639 \\

    \midrule
    
    \multirow{5}*{Electric.}
    & 24 & 0.260 & \textbf{0.251} & 0.528 & 0.427 & 0.263 & 0.281 \\
    & 48 & \textbf{0.319} & 0.346 & 0.409 & 0.551 & 0.373 & 0.381 \\
    & 168 & \textbf{0.427} & 0.544 & 0.959 & 0.893 & 0.609 & 0.599 \\
    & 336 & \textbf{0.565} & 0.713 & 1.079 & 1.035 & 0.855 & 0.823 \\
    & 720 & \textbf{0.861} & 1.182 & 1.001 & 1.548 & 1.263 & 1.278 \\

    \midrule
    
    \multicolumn{2}{l}{Avg.} & \textbf{0.209} & 0.310 & 0.370 & 0.399 & 0.338 & 1.099 \\
    
    \bottomrule
  \end{tabular}
  }
  \caption{Univariate time series forecasting results on MSE.}
  \label{forecast-univar-mse}
\end{table}

\begin{figure}
  \centering
  \includegraphics[width=\linewidth]{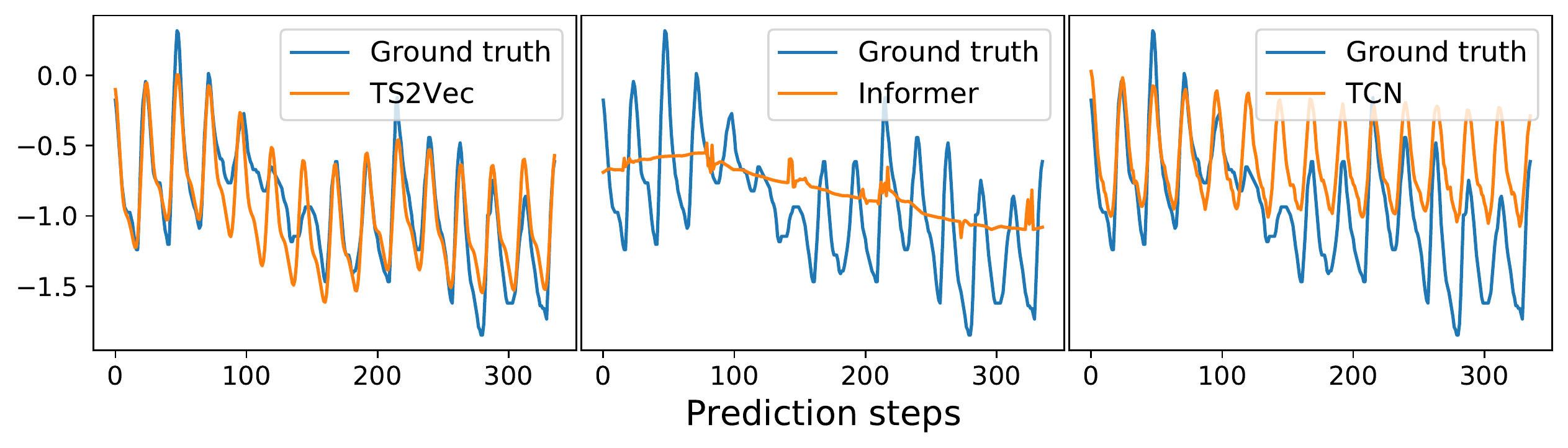}
  \caption{A prediction slice (H=336) of TS2Vec, Informer and TCN on the test set of ETTh$_2$.}\label{fig:forecast-case}
\end{figure}

\subsection{Time Series Forecasting} 

Given the last $T_l$ observations $x_{t-T_l+1}, ..., x_{t}$, time series forecasting task aims to predict the future $H$ observations $x_{t+1}, ..., x_{t+H}$. We use $r_t$, the representation of the last timestamp, to predict future observations. Specifically, we train a linear regression model with $L_2$ norm penalty that takes $r_t$ as input to directly predict future values $\hat{x}$. When $x$ is a univariate time series, $\hat{x}$ has dimension $H$. When $x$ is a multivariate time series with $F$ features, the dimension of $\hat{x}$ should be $FH$.

We compare the performance of TS2Vec and existing SOTAs on four public datasets, including three ETT datasets~\cite{Informer} and the Electricity dataset~\cite{UCI}. We apply Informer~\cite{Informer}, LogTrans~\cite{LogTrans}, LSTnet~\cite{LSTnet}, TCN~\cite{TCN} for both univariate and multivariate settings, N-BEATS~\cite{nbeats} for the univariate setting, and StemGNN~\cite{StemGNN} for the multivariate setting respectively. Follow previous works, we use MSE and MAE to evaluate the forecasting performance. 


The evaluation results on MSE for univariate forecasting are shown in Table~\ref{forecast-univar-mse}, while full forecasting results (univariate and multivariate forecasting on both MSE and MAE) are reported in the appendix due to space limitation. In general, TS2Vec establishes a new SOTA in most of the cases, where TS2Vec achieves a 32.6\% decrease of average MSE on the univariate setting and 28.2\% on the multivariate setting. Furthermore, the representations only need to be learned once for each dataset and can be directly applied to various horizons ($H$s) with linear regressions, which demonstrates the \textit{universality} of the learned representations. Figure~\ref{fig:forecast-case} presents a typical prediction slice with long-term trends and periodical patterns, comparing among the top 3 best-performing methods on univariate forecasting. In this case, Informer shows its capability to capture long-term trends but fails to capture periodical patterns. TCN successfully captures periodical patterns but fails to capture long-term trends. TS2Vec captures both characteristics, showing better predictive results than other methods.

\begin{table}
  \centering
  \scalebox{0.95}{
  \begin{tabular}{lccc}
  \toprule
    Phase & H & TS2Vec & Informer \\
    \midrule
    \multirow{5}*{Training} & 24 & 60.42 + 2.47 & 402.31 \\
    & 48 & 60.42 + 3.63 & 163.41 \\
    & 96 & 60.42 + 5.10 & 392.40 \\
    & 288 & 60.42 + 10.76 & 706.94 \\
    & 672 & 60.42 + 21.38 & 938.36 \\
    \midrule
    \multirow{5}*{Inference} & 24 & 3.01 + 0.01 & 15.91 \\
    & 48 &  3.01 + 0.02 & 4.85 \\
    & 96 &  3.01 + 0.03 & 14.57 \\
    & 288 & 3.01 + 0.10 & 21.82 \\
    & 672 & 3.01 + 0.21 & 28.49 \\
    \bottomrule
  \end{tabular}
  }
  \caption{The running time (in seconds) comparison on multivariate forecasting task on ETTm$_1$ dataset.}
  \label{forecast-time}
\end{table}

The execution time on an NVIDIA GeForce RTX 3090 GPU of the proposed method on ETTm$_1$ is presented in Table~\ref{forecast-time}, compared with Informer~\cite{Informer}, which is known as its remarkable efficiency for long time series forecasting. The training and inference time of TS2Vec are reported by two stages respectively. The training phase includes two stages: (1) learning time series representations through TS2Vec framework, (2) training a linear regressor for each $H$ on top of the learned representations. Similarly, the inference phase also includes two steps: (1) inference of representations for corresponding timestamps, (2) prediction via trained linear regressor. Note that the representation model of TS2Vec only needs to be trained once for different horizon settings.
Whether in training or inference, our method achieves superior efficiency compared to Informer.

\begin{table}
  \centering
  \scalebox{0.85}{
  \begin{tabular}{lcccccccc}
  \toprule
    & \multicolumn{3}{c}{Yahoo} & \multicolumn{3}{c}{KPI} \\
    \cmidrule(r){2-4} \cmidrule(r){5-7}
    & F$_1$ & Prec. & Rec. & F$_1$ & Prec. & Rec. \\
    \midrule
    SPOT & 0.338 & 0.269 & 0.454 & 0.217 & 0.786 & 0.126\\
    DSPOT & 0.316 & 0.241 & 0.458 & 0.521 & 0.623 & 0.447 \\
    DONUT & 0.026 & 0.013 & 0.825 & 0.347 & 0.371 & 0.326 \\
    SR & 0.563 & 0.451 & 0.747 & 0.622 & 0.647 & 0.598\\
    TS2Vec & \textbf{0.745} & 0.729 & 0.762 & \textbf{0.677} & 0.929 & 0.533 \\
    \midrule
    \multicolumn{5}{l}{\textit{Cold-start:}}\\
    FFT & 0.291 & 0.202 & 0.517 & 0.538 & 0.478 & 0.615 \\
    Twitter-AD & 0.245 & 0.166 & 0.462 & 0.330 & 0.411 & 0.276 \\
    Luminol & 0.388 & 0.254 & 0.818 & 0.417 & 0.306 & 0.650 \\
    SR & 0.529 & 0.404 & 0.765 & 0.666 & 0.637 & 0.697 \\
    TS2Vec$^\dag$ & \textbf{0.726} & 0.692 & 0.763 & \textbf{0.676} & 0.907 & 0.540 \\
    \bottomrule
  \end{tabular}
 }
  \caption{Univariate time series anomaly detection results.}
  \label{table:anomaly}
\end{table}

\subsection{Time Series Anomaly Detection}
We follow a streaming evaluation protocol \cite{SR}.
Given any time series slice $x_1, x_2, ..., x_t$ , the task of time series anomaly detection is to determine whether the last point $x_t$ is an anomaly. On the learned representations, an anomaly point may show a clear difference against normal points (Figure~\ref{fig:traj-RefrigerationDevices}). In addition, TS2Vec encourages the contextual consistency on the same timestamp of an instance. Considering this, we propose to define the anomaly score as the dissimilarity of the representations computed from masked and unmasked inputs. Specifically, on inference stage, the trained TS2Vec forwards twice for an input: for the first time, we mask out the last observation $x_t$ only; for the second time, no mask is applied. We denote the representations of the last timestamp for these two forwards as $r_t^{u}$ and $r_t^{m}$ respectively. $L_1$ distance is used to measure the anomaly score:
\begin{equation}
    \alpha_t=\left\| r_t^{u} - r_t^{m} \right\|_1.
\end{equation}
To avoid drifting, following previous works~\cite{SR}, we take the local average of the preceding $Z$ points $\overline{\alpha}_t=\frac{1}{Z}\sum_{i=t-Z}^{t-1}\alpha_i$ to adjust the anomaly score by $\alpha^{adj}_t=\frac{\alpha_t - \overline{\alpha}_t}{\overline{\alpha}_t}$. On inference, a timestamp $t$ is predicted as an anomaly point when $\alpha^{adj}_t > \mu + \beta\sigma$, where $\mu$ and $\sigma$ are the mean and standard deviation respectively of the historical scores and $\beta$ is a hyperparameter. 


We compare TS2Vec with other unsupervised methods of univariate time series anomaly detection, including FFT~\cite{FFT}, SPOT, DSPOT~\cite{POT}, Twitter-AD~\cite{twitter}, Luminol~\cite{luminol}, DONUT~\cite{DONUT} and SR~\cite{SR}. Two public datasets are used to evaluate our model. Yahoo~\cite{yahoo} is a benchmark dataset for anomaly detection, including 367 hourly sampled time series with tagged anomaly points. It converges a wide variety of anomaly types such as outliers and change-points. KPI~\cite{SR} is a competition dataset released by AIOPS Challenge. The dataset includes multiple minutely sampled real KPI curves from many Internet companies. 
The experimental settings are detailed in the appendix. 



In the normal setting, each time series sample is split into two halves according to the time order, where the first half is for unsupervised training and the second is for evaluation. However, among the baselines, Luminol, Twitter-AD and FFT do not require additional training data to start. Therefore these methods are compared under a cold-start setting, in which all the time series are for testing. In this setting, the TS2Vec encoder is trained on \textit{FordA} dataset in the UCR archive, and tested on Yahoo and KPI datasets. We denote this transferred version of our model as TS2Vec$^\dag$. We set $\beta=4$ empirically and $Z=21$ following~\cite{SR} for both settings. In the normal setting, $\mu$ and $\sigma$ of our protocol are computed using the training split for each time series, while in the cold-start setting they are computed using all historical data points before the recent point.

Table~\ref{table:anomaly} shows the performance comparison of different methods on F$_1$ score, precision and recall. In the normal setting, TS2Vec improves the F$_1$ score by 18.2\% on Yahoo dataset and 5.5\% on KPI dataset compared to the best result of baseline methods. In the cold-start setting, the F$_1$ score is improved by 19.7\% on Yahoo dataset and 1.0\% on KPI dataset than the best SOTA result. Note that our method achieves similar scores on these two settings, demonstrating the \textit{transferability} of TS2Vec from one dataset to another.

\section{Analysis}

\subsection{Ablation Study}
\begin{table}
  \centering
  \scalebox{0.88}{
  \begin{tabular}{p{5.5cm}c}
  \toprule
    & Avg. Accuracy \\
    \midrule
    \textbf{TS2Vec} & \textbf{0.829} \\
    w/o Temporal Contrast & 0.819 (-1.0\%) \\
    w/o Instance Contrast & 0.824 (-0.5\%) \\
    w/o Hierarchical Contrast & 0.812 (-1.7\%) \\
    w/o Random Cropping & 0.808 (-2.1\%) \\
    w/o Timestamp Masking & 0.820 (-0.9\%) \\
    w/o Input Projection Layer & 0.817 (-1.2\%) \\

    \midrule
    \multicolumn{2}{c}{\textit{Positive Pair Selection}}\\
    Contextual Consistency \\
    \quad $\rightarrow$ Temporal Consistency & 0.807 (-2.2\%) \\
    \quad $\rightarrow$ Subseries Consistency & 0.780 (-4.9\%) \\
    
    \midrule
    \multicolumn{2}{c}{\textit{Augmentations}}\\
    + Jitter & 0.814 (-1.5\%)\\
    + Scaling & 0.814 (-1.5\%)\\
    + Permutation & 0.796 (-3.3\%)\\
    
    \midrule
    \multicolumn{2}{c}{\textit{Backbone Architectures}}\\
    Dilated CNN\\
    \quad $\rightarrow$ LSTM & 0.779 (-5.0\%) \\
    \quad $\rightarrow$ Transformer & 0.647 (-18.2\%) \\
    \bottomrule
  \end{tabular}
  }
  \caption{Ablation results on 128 UCR datasets.}
  \label{ablation-ucr}
\end{table}
To verify the effectiveness of the proposed components in TS2Vec, a comparison between full TS2Vec and its six variants on 128 UCR datasets is shown in Table~\ref{ablation-ucr}, where (1) \textbf{w/o Temporal Contrast} removes the temporal contrastive loss, (2) \textbf{w/o Instance Contrast} removes the instance-wise contrastive loss, (3) \textbf{w/o Hierarchical Contrast} only performs contrastive learning at the lowest level, (4) \textbf{w/o Random Cropping} uses full sequence for two views rather than using random cropping, (5) \textbf{w/o Timestamp Masking} uses a mask filled with ones in training, and (6) \textbf{w/o Input Projection Layer} removes the input projection layer. The results show that all the above components of TS2Vec are indispensable.

Table~\ref{ablation-ucr} also shows the comparison among different positive pair selection strategies. We replace our proposed contextual consistency, including the timestamp masking and random cropping, into temporal consistency~\cite{TNC} and subseries consistency~\cite{TLoss}.
The temporal consistency takes the timestamps within a certain distance as positives, while the subseries consistency randomly takes two subseries for the same time series as positives.
In addition, we try to add data augmentation techniques to our method, including jitter, scaling and permutation~\cite{TSTCC}, for different views of the input time series. A performance decrease is observed after adding these augmentations. As mentioned earlier, they assume the time series data to follow some invariant assumptions which do not hold for diverse and ever-changing distributions of time series.

To justify our choice of the backbone, we replace the dilated CNN with LSTM and Transformer with a similar parameter size. 
The accuracy score decreases significantly for both cases, showing dilated CNN is an effective choice for the model architecture of time series.

\subsection{Robustness to Missing Data}

\begin{figure}
  \centering
  \includegraphics[width=0.9\linewidth]{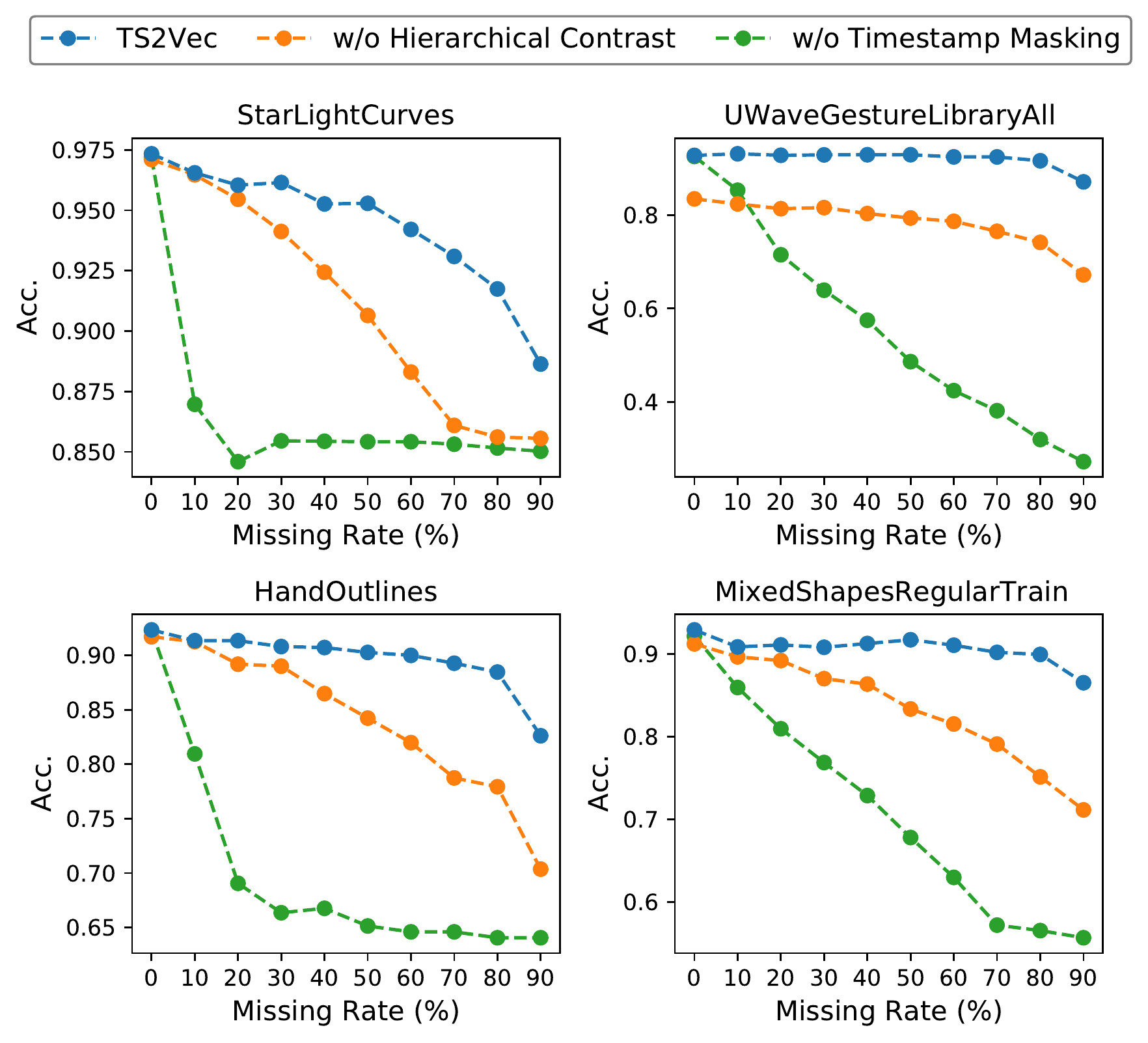}
  \caption{Accuracy scores of the top 4 largest datasets in UCR archive with respect to the rate of missing points.} \label{fig:missing}
\end{figure}

Missing data is a common occurrence for time series collected from the real world. As a universal framework, TS2Vec provides steady performance when feeding data with a large proportion of missing values, in which the proposed hierarchical contrasting and timestamp masking strategies play an important role. Intuitively, timestamp masking enables the network to infer the representations under incomplete contexts. The hierarchical contrasting brings about long-range information, which helps to predict a missing timestamp if its surrounding information is not complete.

The top 4 largest UCR datasets are selected for analysis. We randomly mask out observations for \textit{both training set and test set} with specific missing rates of timestamps. Figure~\ref{fig:missing} shows that without hierarchical contrast or timestamp masking, the classification accuracy drops rapidly with the growth of the missing rate. We also notice that the performance of \textit{w/o Hierarchical Contrast} drops dramatically as the missing rate grows, indicating the importance of long-range information for handling a large number of missing values. We can conclude that TS2Vec is extremely robust to missing points. Specifically, even with 50\% missing values, TS2Vec achieves almost the same accuracy on \textit{UWaveGestureLibraryAll}, and only 2.1\%, 2.1\% and 1.2\% accuracy decrease on \textit{StarLightCurves}, \textit{HandOutlines} and \textit{MixedShapesRegularTrain} respectively.

\subsection{Visualized Explanation}\label{visualization}

\begin{figure}
  \centering
  \captionsetup[subfigure]{font=scriptsize,labelfont=scriptsize}
  \begin{subfigure}{0.33\linewidth}
    \includegraphics[width=\linewidth]{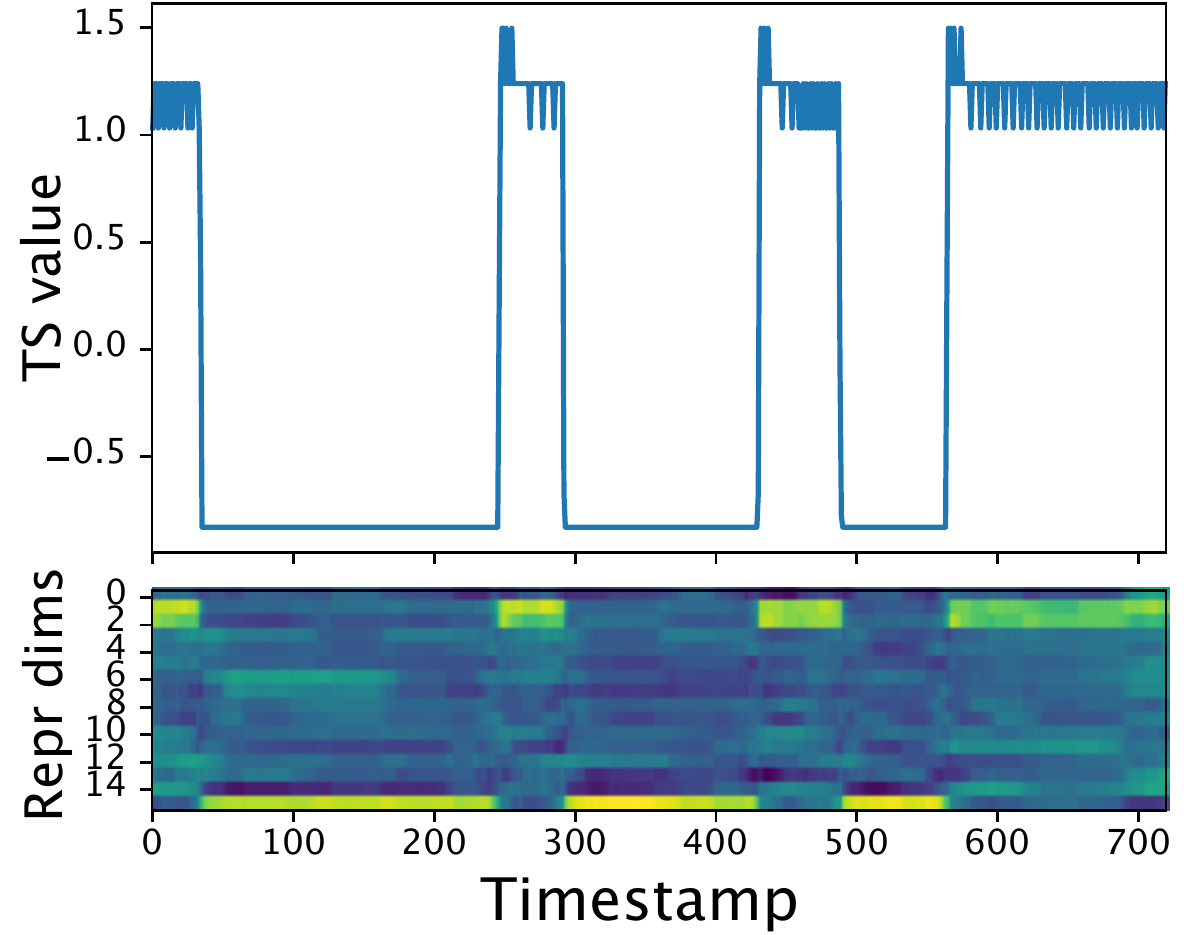}
    \caption{ScreenType.} \label{fig:traj-ScreenType}
  \end{subfigure}%
  \begin{subfigure}{0.33\linewidth}
    \includegraphics[width=\linewidth]{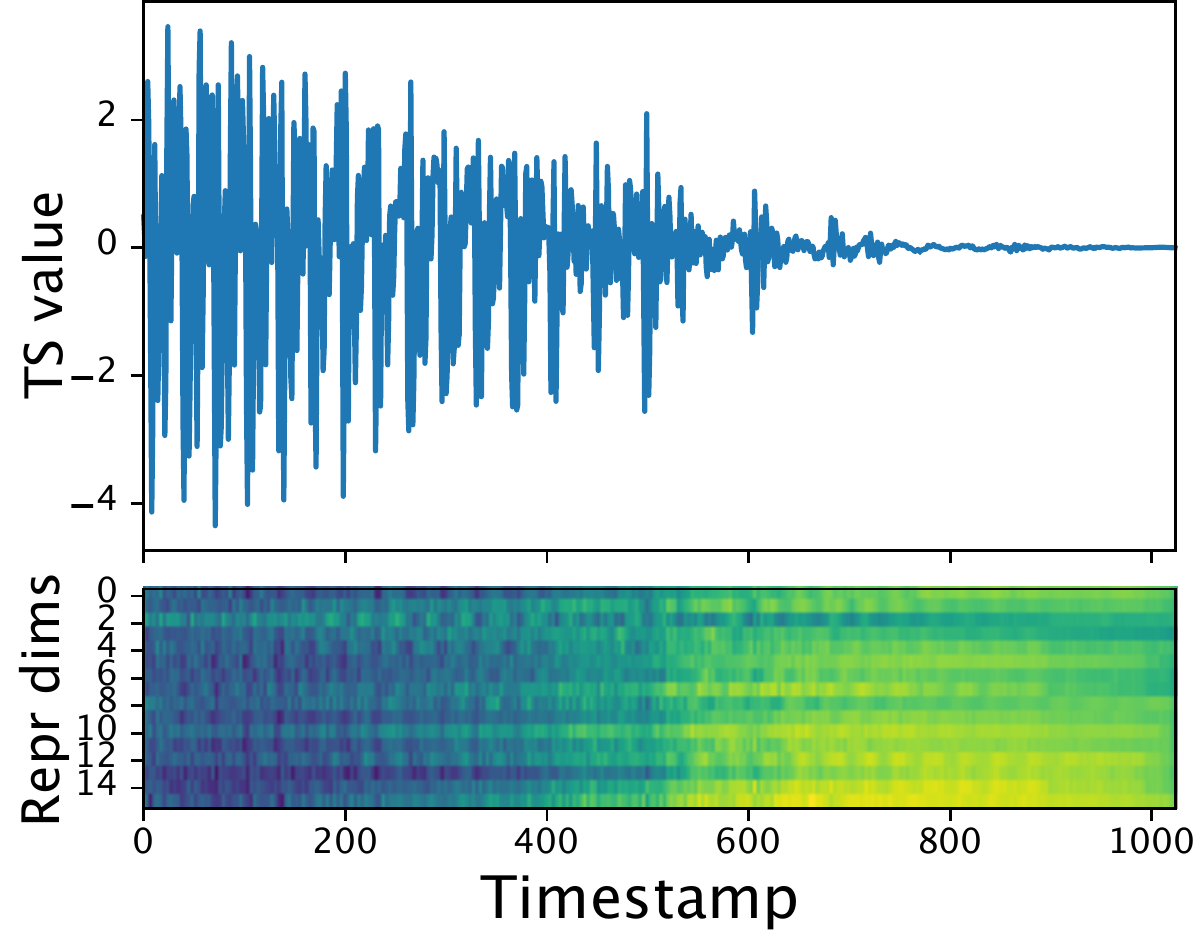}
    \caption{Phoneme.} \label{fig:traj-Phoneme}
  \end{subfigure}%
  \begin{subfigure}{0.33\linewidth}
    \includegraphics[width=\linewidth]{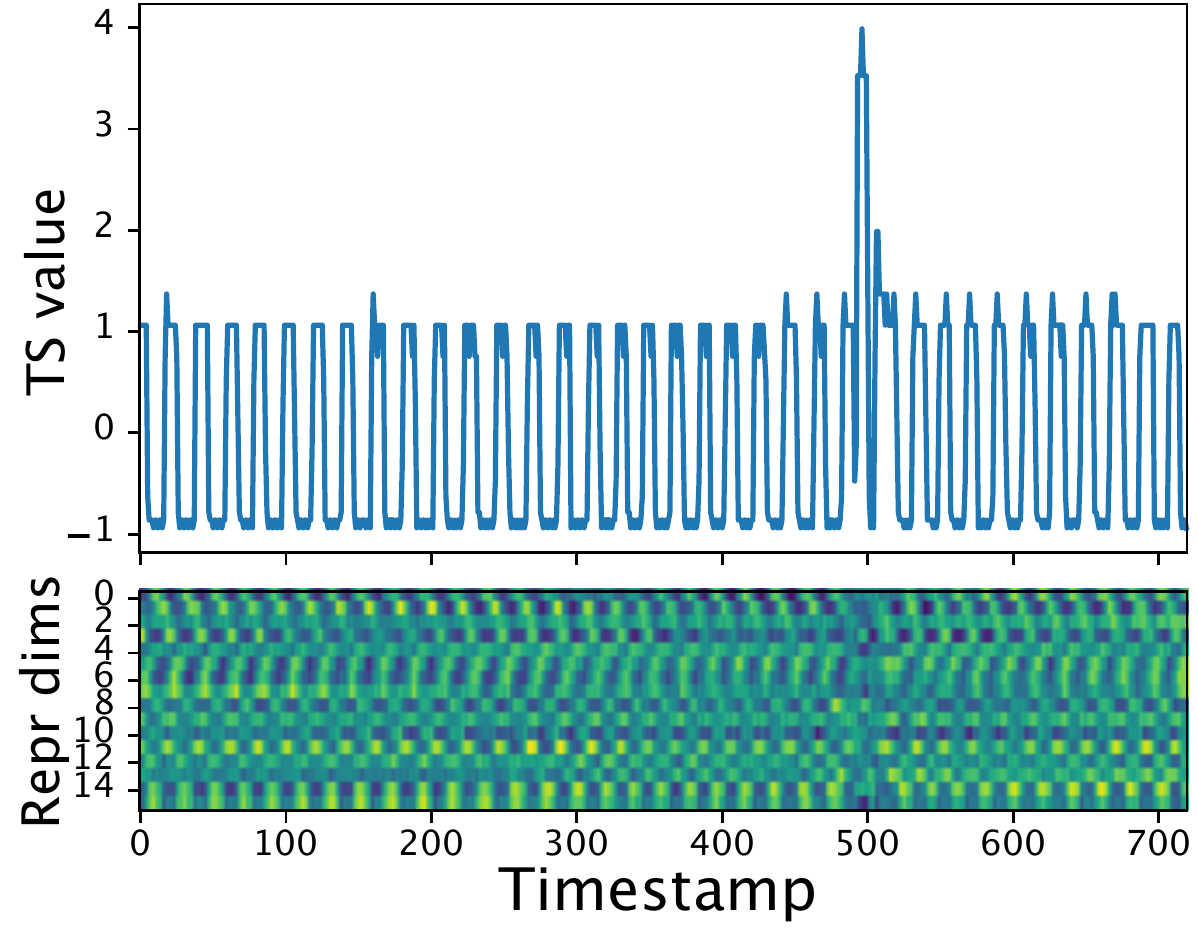}
    \caption{RefrigerationDevices.} \label{fig:traj-RefrigerationDevices}
  \end{subfigure}
  
  \caption{The heatmap visualization of the learned representations of TS2Vec over time.}\label{fig:trajs}
\end{figure}

This section visualizes the learned representations over time on three datasets from UCR archive, including \textit{ScreenType}, \textit{Phoneme} and \textit{RefrigerationDevices} datasets (Figure~\ref{fig:trajs}). We choose the first sample from the test set and select the top 16 representation dimensions with the largest variances for visualization. Figure~\ref{fig:traj-ScreenType} corresponds to a time series similar to binary digital signals, where the representation learned by TS2Vec clearly distinguishes the timestamps with high and low values respectively. 
Figure~\ref{fig:traj-Phoneme} shows an audio signal with shrinking volatility. The learned representation is able to reflect the evolving trend across timestamps. 
In Figure~\ref{fig:traj-RefrigerationDevices}, the time series has periodical patterns with a sudden spike. One can notice that the learned representations of spiked timestamps show an obvious difference from normal timestamps, demonstrating the ability of TS2Vec for capturing the change of time series distributions.

\section{Conclusion}
This paper proposes a universal representation learning framework for time series, namely TS2Vec, which applies hierarchical contrasting to learn scale-invariant representations within augmented context views. The evaluation of the learned representations on three time-series-related tasks (including time series classification, forecasting and anomaly detection) demonstrates the universality and effectiveness of TS2Vec. We also show that TS2Vec provides steady performance when feeding incomplete data, in which the hierarchical contrastive loss and timestamp masking play important roles.
Furthermore, visualization of the learned representations validates the capability of TS2Vec to capture the dynamics of time series. Ablation study proves the effectiveness of proposed components. The framework of TS2Vec is generic and has potential to be applied for other domains in our future work.

\bibliography{aaai22}

\clearpage

\appendix

\section{Related Work}

\paragraph{Unsupervised Representation of Time Series} Unsupervised representation learning has achieved good performances in computer vision~\cite{simclr,wang2020dense,xu2020hierarchical,pinheiro2020unsupervised}, natural language processing~\cite{gao2021simcse,logeswaran2018efficient} and speech recognition~\cite{baevski2020wav2vec,self-training-speech}. In the time series domain, 
SPIRAL~\cite{lei2017similarity} proposes an unsupervised method by constraining the learned representations to preserve pairwise similarities in the time domain. TimeNet~\cite{malhotra2017timenet} designs a recurrent neural network to train an encoder jointly with a decoder that reconstructs the input signal from its learned representations. RWS~\cite{RWS} constructs elaborately designed kernels to generate the vector representation of time series with an efficient approximation. TST~\cite{TST} learns a transformer-based model with a masked MSE loss. These methods are either not scalable to very long time series, or facing the challenge to model complex time series. To address this problem, T-Loss~\cite{TLoss} employs time-based negative sampling and a triplet loss to learn scalable representations for multivariate time series. TNC~\cite{TNC} leverages local smoothness of a signal to define neighborhoods in time and learns generalizable representations of time series.  TS-TCC~\cite{TSTCC} encourages consistency of different data augmentations. However, these methods only learns representations at a certain semantic level with strong assumptions on transformation-invariance, limiting their universality.

\paragraph{Time Series Forecasting} Deep learning methods including RNNs~\cite{DeepAR,wen2017mqrnn,nbeats}, CNNs~\cite{TCN,wan2019mtcn}, GNNs~\cite{StemGNN} and Transformers~\cite{LogTrans,Informer}, have been widely applied to time series forecasting tasks, which outperform classical models such as ARIMA~\cite{ariyo2014arima} and VAR~\cite{box2015time}. N-BEATS~\cite{nbeats} proposes a deep stack of fully connected layers with forward and backward residual connections for univariate times series forecasting. TCN~\cite{TCN} brings about dilated convolutions for time series forecasting and proves that dilated convolutions outperform RNNs in terms of both efficiency and predictive performance. Furthermore, LSTnet~\cite{LSTnet} combines CNNs and RNNs to capture short-term local dependencies and long-term trends simultaneously. LogTrans~\cite{LogTrans} and Informer~\cite{Informer} tackle the efficiency problem of vanilla self-attention and show remarkable performance on forecasting tasks with long sequences. In addition, graph neural networks are extensively studied in the area of multivariate time-series forecasting. For instance, StemGNN~\cite{StemGNN} models multivariate time series entirely in the spectral domain, showing competitive performance on various datasets.

\paragraph{Unsupervised Anomaly Detectors for Time Series} In the past years, statistical methods \cite{wavelet,rapiddetect} have been well-studied for time series anomaly detection. FFT~\cite{FFT} leverages fast fourier transform to detect the the areas with potential anomalies. SPOT~\cite{POT} is proposed on the basis of Extreme Value Theory, in which the threshold for anomaly scores can be automatically selected. Twitter~\cite{twitter} employs statistical methods to detect anomalies in application and system metrics. However, the traditional methods are facing challenges to be adapted to complex time series data in the real world, therefore more advanced methods are proposed. For example, DONUT~\cite{DONUT} introduces a reconstruction-based approach based on VAE. Besides, SR~\cite{SR} borrows the Spectral Residual model from visual saliency detection domain, showing outstanding performance for anomaly detection.

\section{Supplement of Method}

\begin{table}
  \centering
  \begin{tabular}{p{5.5cm}c}
  \toprule
    & Avg. Acc. \\
    \midrule
    Masking after Input Projection & \textbf{0.829} \\
    Masking before Input Projection & 0.822 \\
    w/o Timestamp Masking & 0.820 \\
    \bottomrule
  \end{tabular}
  \caption{Ablation results of the timestamp masking on 128 UCR datasets.}
  \label{masking-ucr}
\end{table}

\subsection{Input Projection Layer}

This section justifies why the timestamp masking is applied after the input projection layer. In common practice, raw inputs are masked. However, the value range for time series is possibly unbounded and it is impossible to find a special token for raw data. Intuitively, one can mask raw inputs with value $0$, but a time series may contain a segment whose values are all $0$s, which is indistinguishable from the mask token.

In TS2Vec, an input projection layer is employed before timestamp masking to project the the input vectors into higher-dimensional latent vectors:
\begin{equation}
    z_{i,t} = Wx_{i,t}+b,
\end{equation}
where $x_{i,t}\in \mathbbm{R}^{F}$, $W\in \mathbbm{R}^{F'\times F}$, $b\in \mathbbm{R}^{F'}$, and $F'>F$.

Then, we set the latent vector on each masked timestamp to $0$. It can be proved that there are a set of parameters $\{W,b\}$ for the network, such that $Wx+b\neq 0$ for any input $x\in \mathbbm{R}^{F}$ (Lemma \ref{lemma:proj}). Hence, the network has the ability to learn to distinguish the masked and unmasked inputs. Table~\ref{masking-ucr} demonstrates that applying the timestamp masking after the input projection layer achieves better performance.

\begin{lemma}\label{lemma:proj}
Given $W\in \mathbbm{R}^{F'\times F}$ and $F'>F$, there exists $b\in \mathbbm{R}^{F'}$ such that $Wx+b\neq 0$ for all $x\in \mathbbm{R}^{F}$.
\end{lemma}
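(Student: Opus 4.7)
The plan is to exploit the dimension gap $F' > F$: the image $V = \{Wx : x \in \mathbb{R}^F\}$ is a linear subspace of $\mathbb{R}^{F'}$ whose dimension is at most $F$, and therefore cannot equal all of $\mathbb{R}^{F'}$. So the complement $\mathbb{R}^{F'} \setminus V$ is nonempty, and any element of that complement is a legitimate choice of $b$.

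Concretely, I would proceed in three short steps. First, define $V := \operatorname{Image}(W)$ and observe $\dim V \leq \operatorname{rank}(W) \leq F < F'$, so $V$ is a proper subspace of $\mathbb{R}^{F'}$. Second, pick any $b \in \mathbb{R}^{F'} \setminus V$; such a $b$ exists by the previous step, and one can even make it explicit (e.g., choose an orthonormal basis $w_1,\dots,w_r$ of $V$, extend to an orthonormal basis of $\mathbb{R}^{F'}$, and let $b$ be any of the new basis vectors). Third, verify the required property: if there were some $x \in \mathbb{R}^F$ with $Wx + b = 0$, then $b = -Wx \in V$ since $V$ is a linear subspace closed under negation, contradicting $b \notin V$. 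Hence $Wx + b \neq 0$ for every $x$.

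There is really no obstacle here — the argument is a one-line rank/dimension observation dressed up as a lemma. The only thing worth being careful about is the sign: one wants $-b \notin V$, which (because $V$ is a subspace, hence symmetric about the origin) is equivalent to $b \notin V$, so picking $b$ outside the image suffices without any further adjustment. The whole proof should take no more than a few lines in the paper.
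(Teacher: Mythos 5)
Your proof is correct and uses exactly the same rank/dimension argument as the paper: since $\mathrm{rank}(W) \leq F < F'$, the image of $W$ is a proper subspace of $\mathbbm{R}^{F'}$, and any $b$ outside it works. You are slightly more careful about the sign ($b$ versus $-b$) than the paper's write-up, but the argument is the same.
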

\begin{proof}
Let $W=[w_1, w_2, ..., w_F]$, where $w_i$ is the column vector and $\mathrm{dim}(w_i)=F'>F$. We have $\mathrm{rank}(W)\leq F<F'=\mathrm{dim}(b)$, so $\exists b\in \mathbbm{R}^{F'}, b \not\in \mathrm{span}(W)$. Since $\exists x\in \mathbbm{R}^{F}, Wx+b=0$ iff $\exists x\in \mathbbm{R}^{F}, Wx=b$ iff $b\in \mathrm{span}(W)$, we conclude that $\exists b\in \mathbbm{R}^{F'}, \forall x\in \mathbbm{R}^{F}, Wx+b\neq 0$.
\end{proof}

\begin{figure}
  \centering
  \includegraphics[width=0.9\linewidth]{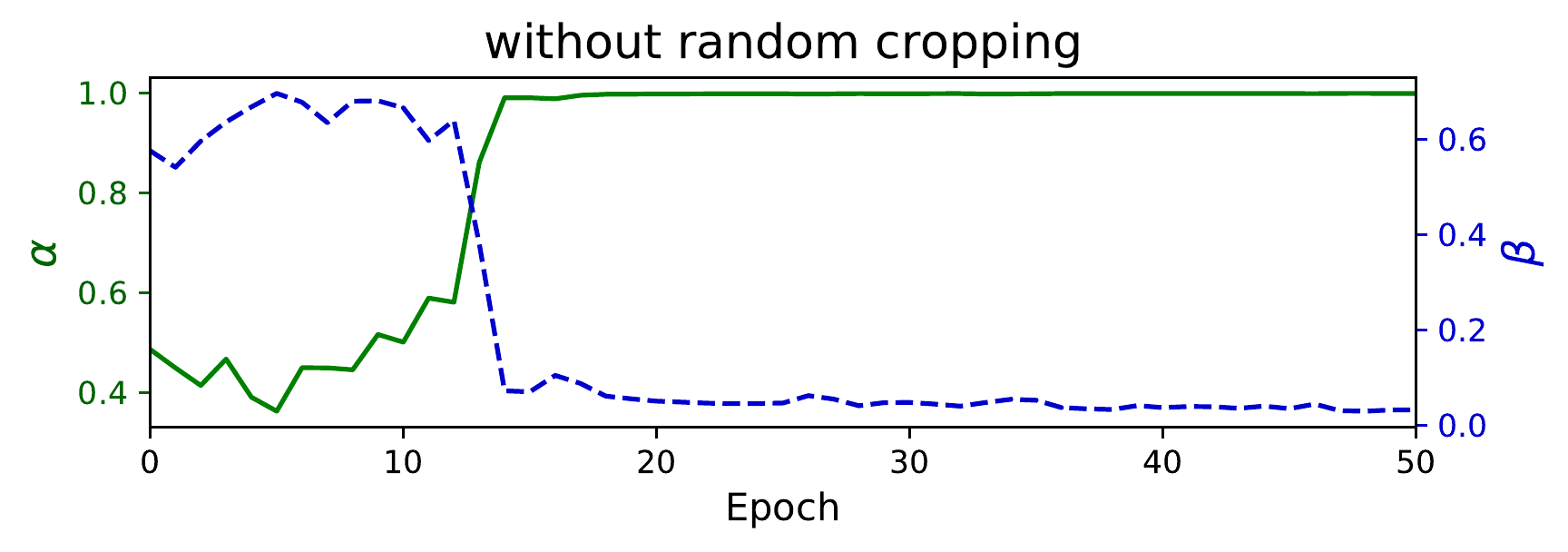}
  \includegraphics[width=0.9\linewidth]{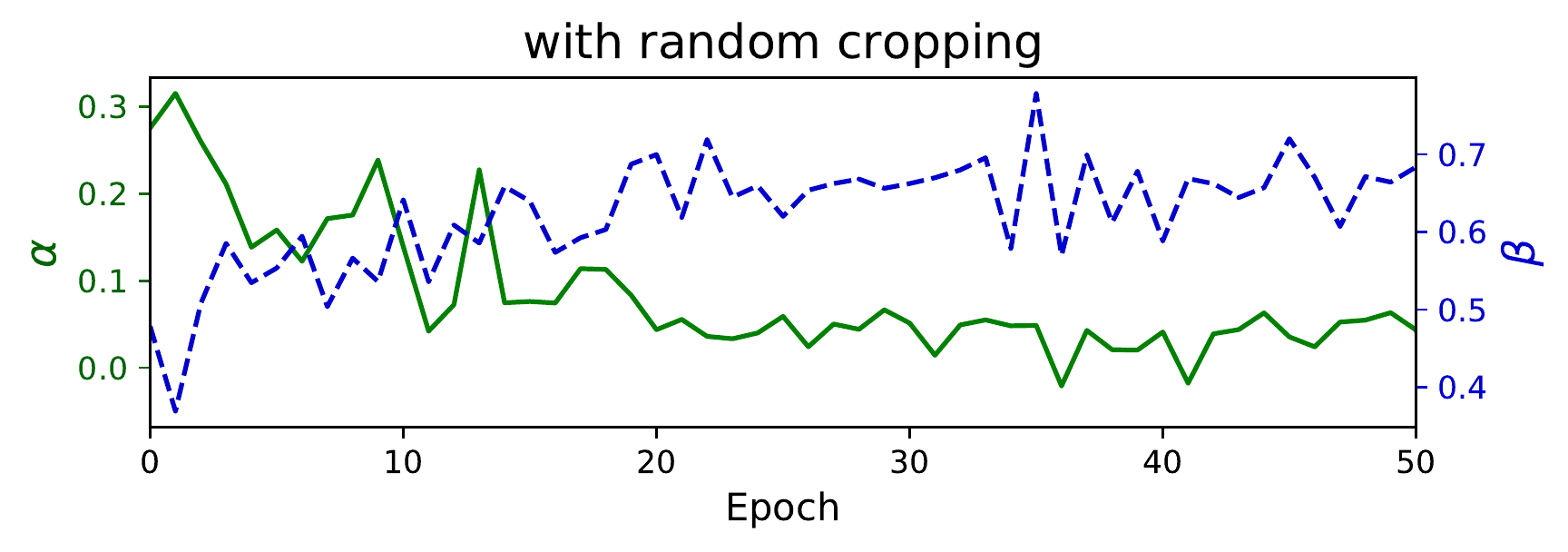}
  \caption{$\alpha$ and $\beta$ over training epochs of temporal contrastive loss on the \textit{ScreenType} dataset.}\label{fig:crop-metrics}
\end{figure}

\subsection{Random Cropping}

For any time series input $x_i \in \mathbbm{R}^{T\times F}$, TS2Vec randomly samples two overlapping time segments $[a_1, b_1]$, $[a_2, b_2]$ such that $0 < a_1 \leq a_2 \leq b_1 \leq b_2 \leq T$. The contextual representations on the overlapped segment $[a_2, b_1]$ should be consistent for two context reviews.

Random cropping is not only a complement to timestamp masking but also a critical design for learning position-agnostic representations. While performing temporal contrasting, removal of the random cropping module can result in \textit{representation collapse}. This phenomenon can be attributed to implicit positional contrasting. It has been proved that convolutional layers can implicitly encode positional information into into latent space~\cite{convposembed1, convposembed2}. Our proposed random cropping makes it impossible for the network to infer the position of a time point on the overlap. When position information is available for temporal contrasting, the model can only utilize the absolute position rather than the contextual information to reduce the temporal contrastive loss. For example, the model can always output $\mathrm{onehot}(t)$, i.e. a vector whose $t$-th element is 1 and other elements is 0, on $t$-th timestamp of the input to reach a global minimum of the loss. To demonstrate this phenomenon, we mask all value of the input series $x_i$ and take the output of the encoder as a pure representation of positional information $p_i \in \mathbbm{R}^{T\times K}$. Then we define a metric $\alpha$ to measure the similarity between the learned representation $r_i$ and the pure positional representation $p_i$ as:
\begin{equation}
    \alpha=\frac{1}{NT}\sum\nolimits_i\sum\nolimits_t\frac{r_{i,t}\cdot p_{i,t}}{\left\|r_{i,t}\right\|_2\left\|p_{i,t}\right\|_2}.
\end{equation}

Furthermore, $\beta$ is defined to measure the deviation between the representations of different samples as:
\begin{equation}
    \small{\beta=\!\frac{1}{TK}\!\sum\nolimits_t \sum\nolimits_k\!\left(\!\sqrt{\sum\nolimits_i\left(r_i-\sum\nolimits_j r_j/N\right)^2/N}\right)_{t,k}\!.}
\end{equation}

Figure \ref{fig:crop-metrics} shows that without random cropping the $\alpha$ is close to $1$ in the later stage of training, while the $\beta$ drops significantly. It indicates a representation collapse that the network only learns the positional embedding as a representation and overlooks the contextual information in this case. In contrast, with random cropping, $\beta$ keeps a relatively high level and avoids the collapse.

\section{Experimental Details}

\subsection{Data Preprocessing}

\paragraph{Normalization}
Following \citet{TLoss,Informer}, for univariate time series, we normalize datasets using z-score so that the set of observations for each dataset has zero mean and unit variance. For multivariate time series, each variable is normalized independently using z-score. For forecasting tasks, all reported metrics are calculated based on the normalized time series.

\paragraph{Variable-Length Data and Missing Observations}
For a variable-length dataset, we pad all the series to the same length. The padded values are set to \textit{NaNs}, representing the missing observations in our implementation. When an observation is missing (\textit{NaN}), the corresponding position of the mask would be set to zero.

\paragraph{Extra Features}
Following~\citet{Informer,DeepAR}, we add extra time features to the input, including minute, hour, day-of-week, day-of-month, day-of-year, month-of-year, and week-of-year (when the corresponding information is available). This is only applied for forecasting datasets, because timestamps are unavailable for time series classification datasets like UCR and UEA.

\subsection{Reproduction Details for TS2Vec}

On representation learning stage, labels and downstream tasks are assumed to be unknown, thus selecting hyperparameters for unsupervised models is challenging. For example, in supervised training, early stopping is a widely used technique based on the development performance. However, without labels, it is hard to know which epoch learns a better representation for the downstream task. Therefore, for our representation learning model, \textit{a fixed set of hyperparameters} is set empirically regardless of the downstream task, and no additional hyperparameter optimization is performed, unlike other unsupervised works such as \cite{TNC,TSTCC}.

For each task, we only use the training set to train the representation model, and apply the model to the testing set to get representations. The batch size is set to 8 by default. The learning rate is 0.001. The number of optimization iterations is set to 200 for datasets with a size less than 100,000, otherwise it is 600. The representation dimension is set to 320 following \cite{TLoss}. In the training phase, we crop a large sequence into pieces with 3,000 timestamps in each. In the encoder of TS2Vec, the linear projection layer is a fully connected layer that maps the input channels to hidden channels, where input channel size is the feature dimension, and the hidden channel size is set to 64. The dilated CNN module contains 10 hidden blocks of "GELU $\rightarrow$ DilatedConv $\rightarrow$ GELU $\rightarrow$ DilatedConv" with skip connections between adjacent blocks. For the $i$-th block, the dilation of the convolution is set to $2^i$. The kernel size is set to 3. Each hidden dilated convolution has a channel size of 64. Finally, an output residual block maps the hidden channels to the output channels, where the output channel size is the representation dimension. All experiments are conducted on a NVIDIA GeForce RTX 3090 GPU.

Changing the default hyperparameters may benefit the performance on some datasets, but worsen it on others. For example, similar to \cite{TLoss}, we find that the number of negative samples in a batch (corresponding to the batch size for TS2Vec) has a notable impact on the performance for individual datasets (see section \ref{cls-full-ucr}).

\subsection{Reproduction Details for Baselines}

For classification tasks, the results of TNC~\cite{TNC}, TS-TCC~\cite{TSTCC} and TST~\cite{TST} are based on our reproduction. For forecasting tasks, the results of TCN, StemGNN and N-BEATS for all datasets and all baselines for Electricity dataset are based on our reproduction. Other results for baselines in this paper are directly taken from~\cite{TLoss, Informer, SR}.

TNC~\cite{TNC}: TNC leverages local smoothness of a signal to define neighborhoods in time and learns generalizable representations for time series. We use the open source code from \url{https://github.com/sanatonek/TNC_representation_learning}. We use the encoder of TS2Vec rather than their original encoders (CNN and RNN) as its backbone, because we observed significant performance improvement for TNC using our encoder compared to using its original encoder on UCR and UEA datasets. This can be attributed to the adaptive receptive fields of dilated convolutions, which better fit datasets from various scales. For other settings, we refer to their default settings on waveform data.

TS-TCC~\cite{TSTCC}: TS-TCC encourages consistency of different data augmentations to learn transformation-invariant representations. We take the open source code from \url{https://github.com/emadeldeen24/TS-TCC}. The batch size is set to 8 for UEA datasets and 128 for UCR datasets. In our setting, the representation dimension of each instance is set to 320 for any baselines. Therefore, the output dimension is set to 320 and a max pooling is employed to aggregate timestamp-level representations into instance level. We refer to their configuration on HAR data for other experimental settings.

TST~\cite{TST}: TST learns a transformer-based model with a masked MSE loss. We use the open source code from \url{https://github.com/gzerveas/mvts_transformer}. The subsample factor is set to $\lceil T / 1000\rceil$ due to memory limitation. We set the representation dimension to 320. A max pooling layer is employed to aggregate timestamp-level representations into instance level, so that the representation size for an instance is 320. Other settings remain the default values in the code.

Informer~\cite{Informer}: Informer is an efficient transformer-based model for time series forecasting and is the previous SOTA on ETT datasets. We use the open source code at \url{https://github.com/zhouhaoyi/Informer2020}. For Electricity dataset, we use the following settings in reproduction: for multivariate cases with H=24,48,168,336,720, the label lengths are 48,48,168,168,336 respectively, and the sequence lengths are 48,96,168,168,336 respectively; for univariate cases with H=24,48,168,336,720, the label lengths are 48,48,336,336,336 respectively, and the sequence lengths are 48,96,336,336,336 respectively; other settings are set by default.

StemGNN~\cite{StemGNN}: StemGNN models multivariate time series entirely in the spectral domain with Graph Fourier Transform and Discrete Fourier Transform. We use the open source code from \url{https://github.com/microsoft/StemGNN}. The window size is set to 100 for H=24/48, 200 for H=96/168, 400 for H=288/336, and 800 otherwise. For other settings, we refer to the paper and default values in open-source code.

TCN~\cite{TCN}: TCN brings about dilated convolutions for time series forecasting. We take the open source code at \url{https://github.com/locuslab/TCN}. We use a stack of ten residual blocks, each of which has a hidden size of 64, following our backbone. The upper epoch limit is 100, and the learning rate is 0.001. Other settings remain the default values in the code.

LogTrans~\cite{LogTrans}: LogTrans breaks the memory bottleneck of Transformers and produces better results than canonical Transformer on time series forecasting. Due to no official code available, we use a modified version of a third-party implementation at \url{https://github.com/mlpotter/Transformer_Time_Series}. The embedding vector size is set to 256, and the kernel size for casual convolutions is 9. We stack three layers for their Transformer. We refer to the paper for other experimental settings.

LSTnet~\cite{LSTnet}: LSTnet combines CNNs and RNNs to incorporate both short-term local dependencies and long-term trends. We take the open source code at \url{https://github.com/laiguokun/LSTNet}. In our experiments, the window size is 500, the hidden channel size is 50, and the CNN filter size is 6. For other settings, we refer to the paper and the default values in code.

N-BEATS~\cite{nbeats}: N-BEATS proposes a deep stack of fully connected layers with forward and backward residual connections for univariate times series forecasting. We take the open source code at \url{https://github.com/philipperemy/n-beats}. In our experiments, two generic stacks with two blocks in each are used. We use a backcast length of 1000 and a hidden layer size of 64. We turn on the 'share\_weights\_in\_stack' option as recommended.

\subsection{Details for Benchmark Tasks}

\paragraph{Time Series Classification} For TS2Vec, the instance-level representations can be obtained by max pooling over all timestamps. To evaluate the instance-level representations on time series classification, we follow the same protocol as \citet{TLoss} where an SVM classifier with RBF kernel is trained on top of the instance-level representations. The penalty $C$ is selected using a grid search by cross-validation of the training set from a search space of $\left\{10^{i} \mid i \in \llbracket-4,4 \rrbracket\right\} \cup\{\infty\}$.

T-Loss, TS-TCC and TNC cannot handle datasets with missing observations, including \textit{DodgerLoopDay}, \textit{DodgerLoopGame} and \textit{DodgerLoopWeekend}. Besides, the result of DTW on \textit{InsectWingbeat} dataset in UEA archive is not reported. Therefore we conduct comparison over the remaining 125 UCR datasets and 29 UEA datasets in the main paper. Note that TS2Vec works on all UCR and UEA datasets, and full results of TS2Vec on all datasets are provided in section \ref{cls-full-ucr}.

\paragraph{Time Series Forecasting} To evaluate the timestamp-level representations on time series forecasting, we propose a linear protocol where a ridge regression (i.e., a linear regression model with $L_2$ regularization term $\alpha$) is trained on top of the learned representations to predict the future values. The regularization term $\alpha$ is selected using a grid search on the validation set from a search space of \{0.1, 0.2, 0.5, 1, 2, 5, 10, 20, 50, 100, 200, 500, 1000\}, while all evaluation results are reported on the test set.

We use two metrics to evaluate the forecasting performance, including $\mathrm{MSE}=\frac{1}{HF}\sum_{i=1}^H\sum_{j=1}^F(x_{t+i}^{(j)}-\hat{x}_{t+i}^{(j)})^2$ and $\mathrm{MAE}=\frac{1}{HF}\sum_{i=1}^H\sum_{j=1}^F |x_{t+i}^{(j)}-\hat{x}_{t+i}^{(j)}|$, where $x_{t+i}^{(j)}, \hat{x}_{t+i}^{(j)}$ is the observed and predicted value respectively on variable $j$ at timestamp $t+i$. The overall metrics for a dataset is the average MSE and MAE over all slices and instances.
 
ETT datasets collect 2-years power transformer data from 2 stations, including ETTh$_1$, ETTh$_2$ for 1-hour-level and ETTm$_1$ for 15-minute-level, which combines long-term trends, periodical patterns, and many irregular patterns. The Electricity dataset contains the electricity consumption data of 321 clients (instances) over 3 years. Following~\cite{Informer, LogTrans}, we resample Electricity into hourly data. The train/val/test split is 12/4/4 months for ETT datasets following~\cite{Informer} and 60\%/20\%/20\% for Electricity. For univariate forecasting tasks, the target value is set as 'oil temperature' for ETT datasets and 'MT\_001' for Electricity. To evaluate the performance of both short-term and long-term forecasting, we prolong the prediction horizon $H$ progressively, from 1 day to 30 days for hourly data and from 6 hours to 7 days for minutely data.

\paragraph{Time Series Anomaly Detection} In common practice, point-wise metrics are not concerned. It is adequate for the anomaly detector to trigger an alarm at any point in a continuous anomaly segment if the delay is not too long. Hence, we follow exactly the same evaluation strategy as~\citet{SR,DONUT}, where anomalies detected within a certain delay (7 steps for minutely data and 3 steps for hourly data) are considered correct. Specifically, if any time point in an anomaly segment is detected as an anomaly within a certain delay, all points in this segment are treated as correct, incorrect otherwise. For any normal time point, no adjustment is applied. Besides, in preprocess stage, the raw data is differenced $d$ times to avoid drifting, where $d$ is the number of unit roots determined by the ADF test.

\section{Full Results}

\subsection{Time Series Forecasting}

The evaluation results are shown in Table~\ref{forecast-univar} for univariate forecasting and Table~\ref{forecast-multivar} for multivariate forecasting. In general, TS2Vec establishes a new SOTA in most of the cases, where TS2Vec achieves a 32.6\% decrease of average MSE on the univariate setting and 28.2\% on the multivariate setting.

\subsection{Time Series Classification}\label{cls-full-ucr}

\begin{figure*}
  \centering
  \begin{subfigure}{0.3\linewidth}
    \includegraphics[width=\linewidth]{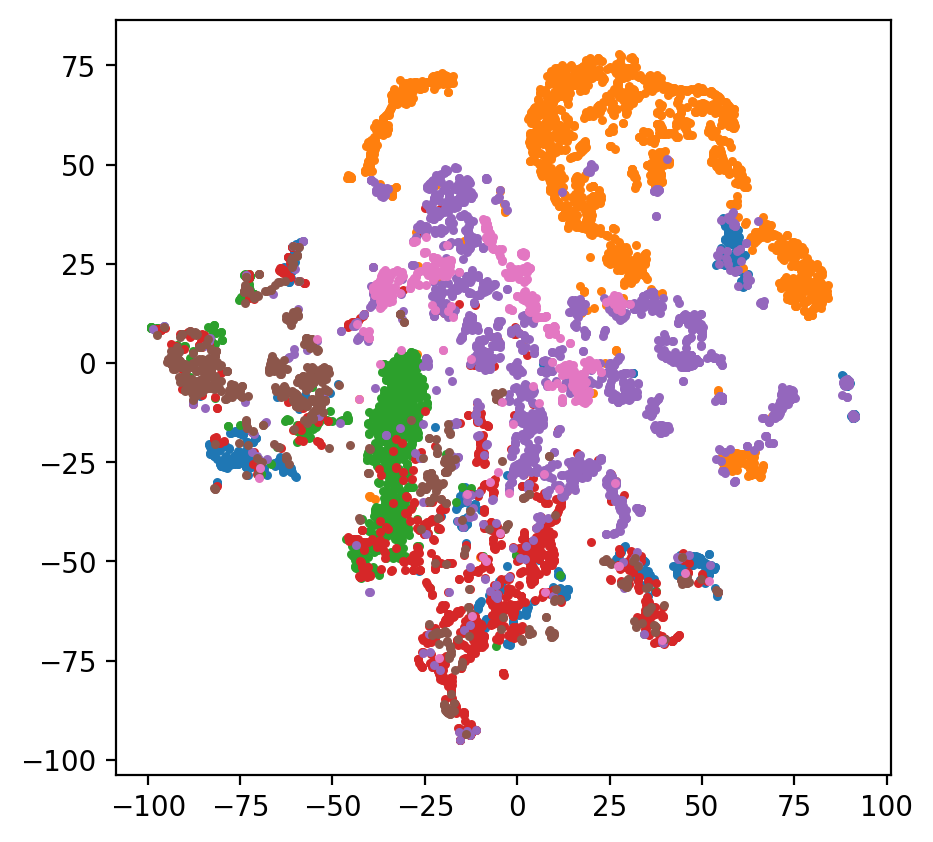}
    \caption{ElectricDevices.} \label{fig:tsne-elec}
  \end{subfigure}%
  \begin{subfigure}{0.3\linewidth}
    \includegraphics[width=\linewidth]{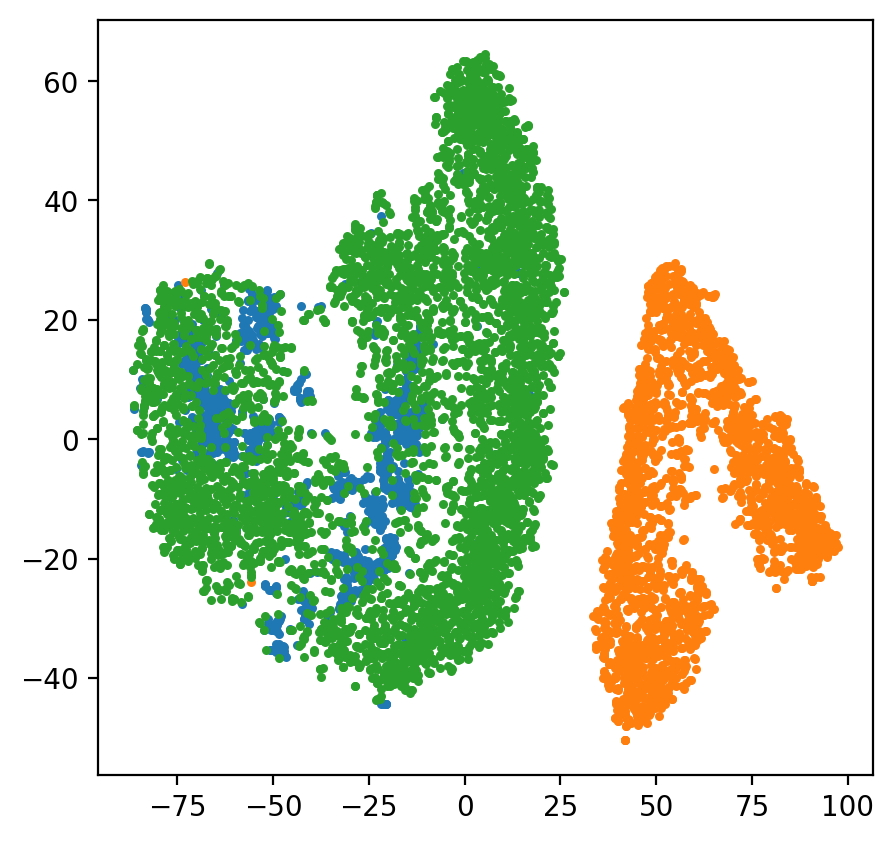}
    \caption{StarLightCurves.} \label{fig:tsne-star}
  \end{subfigure}%
  \begin{subfigure}{0.3\linewidth}
    \includegraphics[width=\linewidth]{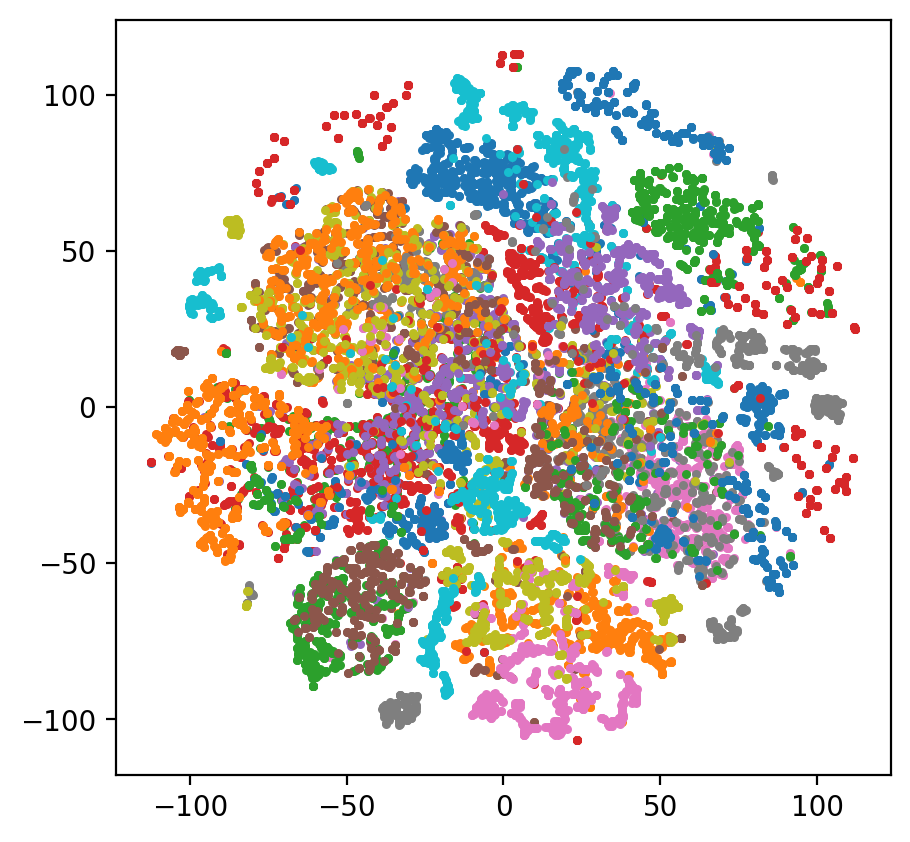}
    \caption{Crop.} \label{fig:tsne-crop}
  \end{subfigure}
  
  \caption{T-SNE visualizations of the learned representations of TS2Vec on the top 3 UCR datasets with the largest number of test samples. Different colors represent different classes.}\label{img:tsne}
\end{figure*}

\paragraph{Performance}
Table \ref{full-ucr-unsup} presents the full results of our method on 128 UCR datasets, compared with other existing methods of unsupervised representation, including T-Loss~\cite{TLoss}, TNC~\cite{TNC}, TS-TCC~\cite{TSTCC}, TST~\cite{TST} and DTW~\cite{chen2013dtw}. Among these baselines, TS2Vec achieves best accuracy on average. Besides, full results of TS2Vec for 30 multivariate datasets in the UEA archive are also provided in Table \ref{full-uea-unsup}, where TS2Vec provides best average performance.

\paragraph{Influence of the Batch Size}
The results of TS2Vec trained with different batch sizes ($B$) are also shown in Table \ref{full-ucr-unsup}. Although different $B$s get close average scores, there are notable differences between different $B$s on the scores of individual datasets.

\paragraph{Other Baselines}
To test the \textit{transferability} of the representations, for each UCR dataset, we use the representations computed by an encoder trained on another dataset \textit{FordA} following the setting in T-Loss~\cite{TLoss}. Table \ref{full-ucr-other} shows that the transfer version of our method (TS2Vec$^\dag$) achieves a 3.8\% average accuracy improvement compared to T-Loss's transfer version (T-Loss$^\dag$). Besides, the scores achieved by TS2Vec$^\dag$ are close to these of our non-transfer version, demonstrating the \textit{transferability} of TS2Vec from a dataset to another.

Note that T-Loss also provides an ensemble version, denoted as T-Loss-4X in this paper, which combines the learned representations from 4 encoders trained with different number of negative samples. Table \ref{full-ucr-other} shows that our non-ensemble version (with a representation size of $320$) outperforms T-Loss-4X (with a representation size of $1280$).

\paragraph{Visualization}
We visualize the learned time series representation by T-SNE in Figure \ref{img:tsne}. It implies that the learned representations can distinguish different classes in the latent space.

\begin{table*}[!ht]
  \centering
  \scalebox{0.9}{
  \begin{tabular}{lccccccccccccc}
  \toprule
         &  & \multicolumn{2}{c}{TS2Vec} &
         \multicolumn{2}{c}{Informer} &
         \multicolumn{2}{c}{LogTrans} &
         \multicolumn{2}{c}{N-BEATS} &
         \multicolumn{2}{c}{TCN} &
         \multicolumn{2}{c}{LSTnet} \\
    \cmidrule(r){3-4} \cmidrule(r){5-6} \cmidrule(r){7-8} \cmidrule(r){9-10} \cmidrule(r){11-12} \cmidrule(r){13-14}
    Dataset & H & MSE & MAE & MSE & MAE & MSE & MAE & MSE & MAE & MSE & MAE & MSE & MAE \\
    \midrule
    \multirow{5}*{ETTh$_1$}
    & 24 & \textbf{0.039} & \textbf{0.152} & 0.098 & 0.247 & 0.103 & 0.259 & 0.094 & 0.238 & 0.075 & 0.210 & 0.108 & 0.284 \\
    & 48 & \textbf{0.062} & \textbf{0.191} & 0.158 & 0.319 & 0.167 & 0.328 & 0.210 & 0.367 & 0.227 & 0.402 & 0.175 & 0.424 \\
    & 168 & \textbf{0.134} & \textbf{0.282} & 0.183 & 0.346 & 0.207 & 0.375 & 0.232 & 0.391 & 0.316 & 0.493 & 0.396 & 0.504 \\
    & 336 & \textbf{0.154} & \textbf{0.310} & 0.222 & 0.387 & 0.230 & 0.398 & 0.232 & 0.388 & 0.306 & 0.495 & 0.468 & 0.593 \\
    & 720 & \textbf{0.163} & \textbf{0.327} & 0.269 & 0.435 & 0.273 & 0.463 & 0.322 & 0.490 & 0.390 & 0.557 & 0.659 & 0.766 \\
    
    \midrule
    
    \multirow{5}*{ETTh$_2$}
    & 24 & \textbf{0.090} & \textbf{0.229} & 0.093 & 0.240 & 0.102 & 0.255 & 0.198 & 0.345 & 0.103 & 0.249 & 3.554 & 0.445 \\
    & 48 & \textbf{0.124} & \textbf{0.273} & 0.155 & 0.314 & 0.169 & 0.348 & 0.234 & 0.386 & 0.142 & 0.290 & 3.190 & 0.474 \\
    & 168 & \textbf{0.208} & \textbf{0.360} & 0.232 & 0.389 & 0.246 & 0.422 & 0.331 & 0.453 & 0.227 & 0.376 & 2.800 & 0.595 \\
    & 336 & \textbf{0.213} & \textbf{0.369} & 0.263 & 0.417 & 0.267 & 0.437 & 0.431 & 0.508 & 0.296 & 0.430 & 2.753 & 0.738 \\
    & 720 & \textbf{0.214} & \textbf{0.374} & 0.277 & 0.431 & 0.303 & 0.493 & 0.437 & 0.517 & 0.325 & 0.463 & 2.878 & 1.044 \\
    
    \midrule
    
    \multirow{5}*{ETTm$_1$}
    & 24 & \textbf{0.015} & \textbf{0.092} & 0.030 & 0.137 & 0.065 & 0.202 & 0.054 & 0.184 & 0.041 & 0.157 & 0.090 & 0.206   \\
    & 48 & \textbf{0.027} & \textbf{0.126} & 0.069 & 0.203 & 0.078 & 0.220 & 0.190 & 0.361 & 0.101 & 0.257 & 0.179 & 0.306 \\
    & 96 & \textbf{0.044} & \textbf{0.161} & 0.194 & 0.372 & 0.199 & 0.386 & 0.183 & 0.353 & 0.142 & 0.311 & 0.272 & 0.399 \\
    & 288 & \textbf{0.103} & \textbf{0.246} & 0.401 & 0.554 & 0.411 & 0.572 & 0.186 & 0.362 & 0.318 & 0.472 & 0.462 & 0.558 \\
    & 672 & \textbf{0.156} & \textbf{0.307} & 0.512 & 0.644 & 0.598 & 0.702 & 0.197 & 0.368 & 0.397 & 0.547 & 0.639 & 0.697 \\

    \midrule
    
    \multirow{5}*{Electricity}
    & 24 & 0.260 & 0.288 & \textbf{0.251} & \textbf{0.275} & 0.528 & 0.447 & 0.427 & 0.330 & 0.263 & 0.279 & 0.281 & 0.287 \\
    & 48 & \textbf{0.319} & \textbf{0.324} & 0.346 & 0.339 & 0.409 & 0.414 & 0.551 & 0.392 & 0.373 & 0.344 & 0.381 & 0.366 \\
    & 168 & \textbf{0.427} & \textbf{0.394} & 0.544 & 0.424 & 0.959 & 0.612 & 0.893 & 0.538 & 0.609 & 0.462 & 0.599 & 0.500 \\
    & 336 & \textbf{0.565} & \textbf{0.474} & 0.713 & 0.512 & 1.079 & 0.639 & 1.035 & 0.669 & 0.855 & 0.606 & 0.823 & 0.624 \\
    & 720 & \textbf{0.861} & \textbf{0.643} & 1.182 & 0.806 & 1.001 & 0.714 & 1.548 & 0.881 & 1.263 & 0.858 & 1.278 & 0.906 \\

    \midrule
    
    \multicolumn{2}{l}{Avg.} & \textbf{0.209} & \textbf{0.296} & 0.310 & 0.390 & 0.370 & 0.434 & 0.399 & 0.426 & 0.338 & 0.413 & 1.099 & 0.536 \\
    
    \bottomrule
  \end{tabular}
  }
  \caption{Univariate time series forecasting results.}
  \label{forecast-univar}
\end{table*}

\begin{table*}
  \centering
  \scalebox{0.9}{
  \begin{threeparttable}
  \begin{tabular}{lccccccccccccc}
  \toprule
         &  & \multicolumn{2}{c}{TS2Vec} &
         \multicolumn{2}{c}{Informer} &
         \multicolumn{2}{c}{StemGNN} &
         \multicolumn{2}{c}{TCN} &
         \multicolumn{2}{c}{LogTrans} &
         \multicolumn{2}{c}{LSTnet} \\
    \cmidrule(r){3-4} \cmidrule(r){5-6} \cmidrule(r){7-8} \cmidrule(r){9-10} \cmidrule(r){11-12} \cmidrule(r){13-14} 
    Dataset & H & MSE & MAE & MSE & MAE & MSE & MAE & MSE & MAE & MSE & MAE & MSE & MAE \\
    \midrule
    \multirow{5}*{ETTh$_1$}
    & 24 & 0.599 & \textbf{0.534} & \textbf{0.577} & 0.549 & 0.614 & 0.571 & 0.767 & 0.612 & 0.686 & 0.604 & 1.293 & 0.901 \\
    & 48 & \textbf{0.629} & \textbf{0.555} & 0.685 & 0.625 & 0.748 & 0.618 & 0.713 & 0.617 & 0.766 & 0.757 & 1.456 & 0.960 \\
    & 168 & 0.755 & 0.636 & 0.931 & 0.752 & \textbf{0.663} & \textbf{0.608} & 0.995 & 0.738 & 1.002 & 0.846 & 1.997 & 1.214 \\
    & 336 & \textbf{0.907} & \textbf{0.717} & 1.128 & 0.873 & 0.927 & 0.730 & 1.175 & 0.800 & 1.362 & 0.952 & 2.655 & 1.369 \\
    & 720 & \textbf{1.048} & \textbf{0.790} & 1.215 & 0.896 & --\tnote{*} & -- & 1.453 & 1.311 & 1.397 & 1.291 & 2.143 & 1.380 \\
    
    \midrule

    \multirow{5}*{ETTh$_2$}
    & 24 & \textbf{0.398} & \textbf{0.461} & 0.720 & 0.665 & 1.292 & 0.883 & 1.365 & 0.888 & 0.828 & 0.750 & 2.742 & 1.457 \\
    & 48 & \textbf{0.580} & \textbf{0.573} & 1.457 & 1.001 & 1.099 & 0.847 & 1.395 & 0.960 & 1.806 & 1.034 & 3.567 & 1.687 \\
    & 168 & \textbf{1.901} & \textbf{1.065} & 3.489 & 1.515 & 2.282 & 1.228 & 3.166 & 1.407 & 4.070 & 1.681 & 3.242 & 2.513 \\
    & 336 & \textbf{2.304} & \textbf{1.215} & 2.723 & 1.340 & 3.086 & 1.351 & 3.256 & 1.481 & 3.875 & 1.763 & 2.544 & 2.591 \\
    & 720 & \textbf{2.650} & \textbf{1.373} & 3.467 & 1.473 & -- & -- & 3.690 & 1.588 & 3.913 & 1.552 & 4.625 & 3.709 \\
    
    \midrule
    
    \multirow{5}*{ETTm$_1$}
    & 24 & 0.443 & 0.436 & \textbf{0.323} & \textbf{0.369} & 0.620 & 0.570 & 0.324 & 0.374 & 0.419 & 0.412 & 1.968 & 1.170 \\
    & 48 & 0.582 & 0.515 & 0.494 & 0.503 & 0.744 & 0.628 & \textbf{0.477} & \textbf{0.450} & 0.507 & 0.583 & 1.999 & 1.215 \\
    & 96 & \textbf{0.622} & \textbf{0.549} & 0.678 & 0.614 & 0.709 & 0.624 & 0.636 & 0.602 & 0.768 & 0.792 & 2.762 & 1.542 \\
    & 288 & \textbf{0.709} & \textbf{0.609} & 1.056 & 0.786 & 0.843 & 0.683 & 1.270 & 1.351 & 1.462 & 1.320 & 1.257 & 2.076 \\
    & 672 & \textbf{0.786} & \textbf{0.655} & 1.192 & 0.926 & -- & -- & 1.381 & 1.467 & 1.669 & 1.461 & 1.917 & 2.941 \\

    \midrule
    
    \multirow{5}*{Electricity}
    & 24 & \textbf{0.287} & \textbf{0.374} & 0.312 & 0.387 & 0.439 & 0.388 & 0.305 & 0.384 & 0.297 & \textbf{0.374} & 0.356 & 0.419 \\
    & 48 & \textbf{0.307} & \textbf{0.388} & 0.392 & 0.431 & 0.413 & 0.455 & 0.317 & 0.392 & 0.316 & 0.389 & 0.429 & 0.456 \\
    & 168 & \textbf{0.332} & \textbf{0.407} & 0.515 & 0.509 & 0.506 & 0.518 & 0.358 & 0.423 & 0.426 & 0.466 & 0.372 & 0.425 \\
    & 336 & \textbf{0.349} & 0.420 & 0.759 & 0.625 & 0.647 & 0.596 & \textbf{0.349} & 0.416 & 0.365 & 0.417 & 0.352 & \textbf{0.409} \\
    & 720 & 0.375 & 0.438 & 0.969 & 0.788 & -- & -- & 0.447 & 0.486 & \textbf{0.344} & \textbf{0.403} & 0.380 & 0.443 \\
    
    \midrule
    
    \multicolumn{2}{l}{Avg.} & \textbf{0.828} & \textbf{0.636} & 1.154 & 0.781 & -- & -- & 1.192 & 0.837 & 1.314 & 0.892 & 1.903 & 1.444 \\
    \bottomrule
  \end{tabular}
  \begin{tablenotes}
    \footnotesize
        \item[*] All $H \geq 672$ cases of StemGNN fail for the out-of-memory (24GB) even when $batch\_size=1$.
  \end{tablenotes}
  \end{threeparttable}
 }
  \caption{Multivariate time series forecasting results.}
  \label{forecast-multivar}
\end{table*}

\clearpage

\onecolumn
\relsize{-1}
\begin{longtable}{lcccccccc}
  \toprule
     & \multicolumn{3}{c}{TS2Vec} &
     \multirow{2}*{T-Loss} &
     \multirow{2}*{TNC} &
     \multirow{2}*{TS-TCC} &
     \multirow{2}*{TST} &
     \multirow{2}*{DTW}\\
    \cmidrule(r){2-4}
    Dataset & B=4 & B=8 & B=16 \\
    \midrule
    \endhead
Adiac & \textbf{0.775} & 0.762 & 0.765 & 0.675 & 0.726 & 0.767 & 0.550 & 0.604 \\
ArrowHead & 0.823 & \textbf{0.857} & 0.817 & 0.766 & 0.703 & 0.737 & 0.771 & 0.703 \\
Beef & \textbf{0.767} & \textbf{0.767} & 0.633 & 0.667 & 0.733 & 0.600 & 0.500 & 0.633 \\
BeetleFly & 0.850 & 0.900 & 0.900 & 0.800 & 0.850 & 0.800 & \textbf{1.000} & 0.700 \\
BirdChicken & 0.800 & 0.800 & 0.800 & \textbf{0.850} & 0.750 & 0.650 & 0.650 & 0.750 \\
Car & \textbf{0.883} & 0.833 & 0.700 & 0.833 & 0.683 & 0.583 & 0.550 & 0.733 \\
CBF & \textbf{1.000} & \textbf{1.000} & \textbf{1.000} & 0.983 & 0.983 & 0.998 & 0.898 & 0.997 \\
ChlorineConcentration & 0.810 & \textbf{0.832} & 0.812 & 0.749 & 0.760 & 0.753 & 0.562 & 0.648 \\
CinCECGTorso & 0.812 & \textbf{0.827} & 0.825 & 0.713 & 0.669 & 0.671 & 0.508 & 0.651 \\
Coffee & \textbf{1.000} & \textbf{1.000} & \textbf{1.000} & \textbf{1.000} & \textbf{1.000} & \textbf{1.000} & 0.821 & \textbf{1.000} \\
Computers & 0.636 & 0.660 & 0.660 & 0.664 & 0.684 & \textbf{0.704} & 0.696 & 0.700 \\
CricketX & 0.800 & 0.782 & \textbf{0.805} & 0.713 & 0.623 & 0.731 & 0.385 & 0.754 \\
CricketY & 0.756 & 0.749 & \textbf{0.769} & 0.728 & 0.597 & 0.718 & 0.467 & 0.744 \\
CricketZ & 0.785 & \textbf{0.792} & 0.790 & 0.708 & 0.682 & 0.713 & 0.403 & 0.754 \\
DiatomSizeReduction & 0.980 & 0.984 & 0.987 & 0.984 & \textbf{0.993} & 0.977 & 0.961 & 0.967 \\
DistalPhalanxOutlineCorrect & \textbf{0.775} & 0.761 & 0.757 & \textbf{0.775} & 0.754 & 0.754 & 0.728 & 0.717 \\
DistalPhalanxOutlineAgeGroup & 0.719 & 0.727 & 0.719 & 0.727 & 0.741 & 0.755 & 0.741 & \textbf{0.770} \\
DistalPhalanxTW & 0.662 & \textbf{0.698} & 0.683 & 0.676 & 0.669 & 0.676 & 0.568 & 0.590 \\
Earthquakes & \textbf{0.748} & \textbf{0.748} & \textbf{0.748} & \textbf{0.748} & \textbf{0.748} & \textbf{0.748} & \textbf{0.748} & 0.719 \\
ECG200 & 0.890 & 0.920 & 0.880 & \textbf{0.940} & 0.830 & 0.880 & 0.830 & 0.770 \\
ECG5000 & 0.935 & 0.935 & 0.934 & 0.933 & 0.937 & \textbf{0.941} & 0.928 & 0.924 \\
ECGFiveDays & \textbf{1.000} & \textbf{1.000} & \textbf{1.000} & \textbf{1.000} & 0.999 & 0.878 & 0.763 & 0.768 \\
ElectricDevices & 0.712 & \textbf{0.721} & 0.719 & 0.707 & 0.700 & 0.686 & 0.676 & 0.602 \\
FaceAll & 0.759 & 0.771 & 0.805 & 0.786 & 0.766 & \textbf{0.813} & 0.504 & 0.808 \\
FaceFour & 0.864 & \textbf{0.932} & \textbf{0.932} & 0.920 & 0.659 & 0.773 & 0.511 & 0.830 \\
FacesUCR & \textbf{0.930} & 0.924 & 0.926 & 0.884 & 0.789 & 0.863 & 0.543 & 0.905 \\
FiftyWords & 0.771 & 0.771 & \textbf{0.774} & 0.732 & 0.653 & 0.653 & 0.525 & 0.690 \\
Fish & \textbf{0.937} & 0.926 & \textbf{0.937} & 0.891 & 0.817 & 0.817 & 0.720 & 0.823 \\
FordA & 0.940 & 0.936 & \textbf{0.948} & 0.928 & 0.902 & 0.930 & 0.568 & 0.555 \\
FordB & 0.789 & 0.794 & 0.807 & 0.793 & 0.733 & \textbf{0.815} & 0.507 & 0.620 \\
GunPoint & 0.980 & 0.980 & 0.987 & 0.980 & 0.967 & \textbf{0.993} & 0.827 & 0.907 \\
Ham & 0.714 & 0.714 & 0.724 & 0.724 & \textbf{0.752} & 0.743 & 0.524 & 0.467 \\
HandOutlines & 0.919 & 0.922 & \textbf{0.930} & 0.922 & \textbf{0.930} & 0.724 & 0.735 & 0.881 \\
Haptics & 0.510 & 0.526 & \textbf{0.536} & 0.490 & 0.474 & 0.396 & 0.357 & 0.377 \\
Herring & 0.625 & \textbf{0.641} & 0.609 & 0.594 & 0.594 & 0.594 & 0.594 & 0.531 \\
InlineSkate & 0.389 & \textbf{0.415} & 0.407 & 0.371 & 0.378 & 0.347 & 0.287 & 0.384 \\
InsectWingbeatSound & 0.629 & \textbf{0.630} & 0.624 & 0.597 & 0.549 & 0.415 & 0.266 & 0.355 \\
ItalyPowerDemand & \textbf{0.961} & 0.925 & 0.960 & 0.954 & 0.928 & 0.955 & 0.845 & 0.950 \\
LargeKitchenAppliances & 0.845 & 0.845 & \textbf{0.875} & 0.789 & 0.776 & 0.848 & 0.595 & 0.795 \\
Lightning2 & 0.836 & \textbf{0.869} & 0.820 & \textbf{0.869} & \textbf{0.869} & 0.836 & 0.705 & \textbf{0.869} \\
Lightning7 & 0.836 & \textbf{0.863} & 0.822 & 0.795 & 0.767 & 0.685 & 0.411 & 0.726 \\
Mallat & 0.915 & 0.914 & 0.873 & \textbf{0.951} & 0.871 & 0.922 & 0.713 & 0.934 \\
Meat & 0.950 & 0.950 & \textbf{0.967} & 0.950 & 0.917 & 0.883 & 0.900 & 0.933 \\
MedicalImages & 0.792 & 0.789 & \textbf{0.793} & 0.750 & 0.754 & 0.747 & 0.632 & 0.737 \\
MiddlePhalanxOutlineCorrect & 0.811 & \textbf{0.838} & 0.825 & 0.825 & 0.818 & 0.818 & 0.753 & 0.698 \\
MiddlePhalanxOutlineAgeGroup & 0.636 & 0.636 & 0.630 & \textbf{0.656} & 0.643 & 0.630 & 0.617 & 0.500 \\
MiddlePhalanxTW & 0.591 & 0.584 & 0.578 & 0.591 & 0.571 & \textbf{0.610} & 0.506 & 0.506 \\
MoteStrain & 0.857 & 0.861 & \textbf{0.863} & 0.851 & 0.825 & 0.843 & 0.768 & 0.835 \\
NonInvasiveFetalECGThorax1 & 0.923 & \textbf{0.930} & 0.919 & 0.878 & 0.898 & 0.898 & 0.471 & 0.790 \\
NonInvasiveFetalECGThorax2 & \textbf{0.940} & 0.938 & 0.935 & 0.919 & 0.912 & 0.913 & 0.832 & 0.865 \\
OliveOil & \textbf{0.900} & \textbf{0.900} & \textbf{0.900} & 0.867 & 0.833 & 0.800 & 0.800 & 0.833 \\
OSULeaf & \textbf{0.876} & 0.851 & 0.843 & 0.760 & 0.723 & 0.723 & 0.545 & 0.591 \\
PhalangesOutlinesCorrect & 0.795 & 0.809 & \textbf{0.823} & 0.784 & 0.787 & 0.804 & 0.773 & 0.728 \\
Phoneme & 0.296 & \textbf{0.312} & 0.309 & 0.276 & 0.180 & 0.242 & 0.139 & 0.228 \\
Plane & \textbf{1.000} & \textbf{1.000} & 0.990 & 0.990 & \textbf{1.000} & \textbf{1.000} & 0.933 & \textbf{1.000} \\
ProximalPhalanxOutlineCorrect & 0.876 & 0.887 & \textbf{0.900} & 0.859 & 0.866 & 0.873 & 0.770 & 0.784 \\
ProximalPhalanxOutlineAgeGroup & 0.844 & 0.834 & 0.829 & 0.844 & \textbf{0.854} & 0.839 & \textbf{0.854} & 0.805 \\
ProximalPhalanxTW & 0.785 & \textbf{0.824} & 0.805 & 0.771 & 0.810 & 0.800 & 0.780 & 0.761 \\
RefrigerationDevices & 0.587 & \textbf{0.589} & \textbf{0.589} & 0.515 & 0.565 & 0.563 & 0.483 & 0.464 \\
ScreenType & 0.405 & 0.411 & 0.397 & 0.416 & \textbf{0.509} & 0.419 & 0.419 & 0.397 \\
ShapeletSim & 0.989 & \textbf{1.000} & 0.994 & 0.672 & 0.589 & 0.683 & 0.489 & 0.650 \\
ShapesAll & 0.897 & 0.902 & \textbf{0.905} & 0.848 & 0.788 & 0.773 & 0.733 & 0.768 \\
SmallKitchenAppliances & 0.723 & 0.731 & \textbf{0.733} & 0.677 & 0.725 & 0.691 & 0.592 & 0.643 \\
SonyAIBORobotSurface1 & 0.874 & \textbf{0.903} & 0.900 & 0.902 & 0.804 & 0.899 & 0.724 & 0.725 \\
SonyAIBORobotSurface2 & 0.890 & 0.871 & 0.889 & 0.889 & 0.834 & \textbf{0.907} & 0.745 & 0.831 \\
StarLightCurves & 0.970 & 0.969 & \textbf{0.971} & 0.964 & 0.968 & 0.967 & 0.949 & 0.907 \\
Strawberry & 0.962 & 0.962 & \textbf{0.965} & 0.954 & 0.951 & \textbf{0.965} & 0.916 & 0.941 \\
SwedishLeaf & 0.939 & 0.941 & \textbf{0.942} & 0.914 & 0.880 & 0.923 & 0.738 & 0.792 \\
Symbols & 0.973 & \textbf{0.976} & 0.972 & 0.963 & 0.885 & 0.916 & 0.786 & 0.950 \\
SyntheticControl & 0.997 & 0.997 & 0.993 & 0.987 & \textbf{1.000} & 0.990 & 0.490 & 0.993 \\
ToeSegmentation1 & 0.930 & 0.917 & \textbf{0.947} & 0.939 & 0.864 & 0.930 & 0.807 & 0.772 \\
ToeSegmentation2 & \textbf{0.915} & 0.892 & 0.900 & 0.900 & 0.831 & 0.877 & 0.615 & 0.838 \\
Trace & \textbf{1.000} & \textbf{1.000} & \textbf{1.000} & 0.990 & \textbf{1.000} & \textbf{1.000} & \textbf{1.000} & \textbf{1.000} \\
TwoLeadECG & 0.982 & 0.986 & 0.987 & \textbf{0.999} & 0.993 & 0.976 & 0.871 & 0.905 \\
TwoPatterns & \textbf{1.000} & \textbf{1.000} & \textbf{1.000} & 0.999 & \textbf{1.000} & 0.999 & 0.466 & \textbf{1.000} \\
UWaveGestureLibraryX & \textbf{0.810} & 0.795 & 0.801 & 0.785 & 0.781 & 0.733 & 0.569 & 0.728 \\
UWaveGestureLibraryY & \textbf{0.729} & 0.719 & 0.720 & 0.710 & 0.697 & 0.641 & 0.348 & 0.634 \\
UWaveGestureLibraryZ & 0.761 & \textbf{0.770} & 0.768 & 0.757 & 0.721 & 0.690 & 0.655 & 0.658 \\
UWaveGestureLibraryAll & \textbf{0.934} & 0.930 & \textbf{0.934} & 0.896 & 0.903 & 0.692 & 0.475 & 0.892 \\
Wafer & 0.995 & \textbf{0.998} & 0.997 & 0.992 & 0.994 & 0.994 & 0.991 & 0.980 \\
Wine & 0.778 & 0.870 & \textbf{0.889} & 0.815 & 0.759 & 0.778 & 0.500 & 0.574 \\
WordSynonyms & 0.699 & 0.676 & \textbf{0.704} & 0.691 & 0.630 & 0.531 & 0.422 & 0.649 \\
Worms & 0.701 & 0.701 & 0.701 & 0.727 & 0.623 & \textbf{0.753} & 0.455 & 0.584 \\
WormsTwoClass & \textbf{0.805} & \textbf{0.805} & 0.753 & 0.792 & 0.727 & 0.753 & 0.584 & 0.623 \\
Yoga & 0.880 & \textbf{0.887} & 0.877 & 0.837 & 0.812 & 0.791 & 0.830 & 0.837 \\
ACSF1 & 0.840 & 0.900 & \textbf{0.910} & 0.900 & 0.730 & 0.730 & 0.760 & 0.640 \\
AllGestureWiimoteX & 0.744 & \textbf{0.777} & 0.751 & 0.763 & 0.703 & 0.697 & 0.259 & 0.716 \\
AllGestureWiimoteY & 0.764 & \textbf{0.793} & 0.774 & 0.726 & 0.699 & 0.741 & 0.423 & 0.729 \\
AllGestureWiimoteZ & 0.734 & 0.746 & \textbf{0.770} & 0.723 & 0.646 & 0.689 & 0.447 & 0.643 \\
BME & 0.973 & \textbf{0.993} & 0.980 & \textbf{0.993} & 0.973 & 0.933 & 0.760 & 0.900 \\
Chinatown & 0.968 & 0.965 & 0.959 & 0.951 & 0.977 & \textbf{0.983} & 0.936 & 0.957 \\
Crop & 0.753 & \textbf{0.756} & 0.753 & 0.722 & 0.738 & 0.742 & 0.710 & 0.665 \\
EOGHorizontalSignal & 0.544 & 0.539 & 0.522 & \textbf{0.605} & 0.442 & 0.401 & 0.373 & 0.503 \\
EOGVerticalSignal & 0.467 & \textbf{0.503} & 0.472 & 0.434 & 0.392 & 0.376 & 0.298 & 0.448 \\
EthanolLevel & 0.480 & 0.468 & 0.484 & 0.382 & 0.424 & \textbf{0.486} & 0.260 & 0.276 \\
FreezerRegularTrain & 0.985 & 0.986 & 0.983 & 0.956 & \textbf{0.991} & 0.989 & 0.922 & 0.899 \\
FreezerSmallTrain & 0.894 & 0.870 & 0.872 & 0.933 & \textbf{0.982} & 0.979 & 0.920 & 0.753 \\
Fungi & 0.962 & 0.957 & 0.946 & \textbf{1.000} & 0.527 & 0.753 & 0.366 & 0.839 \\
GestureMidAirD1 & \textbf{0.631} & 0.608 & 0.615 & 0.608 & 0.431 & 0.369 & 0.208 & 0.569 \\
GestureMidAirD2 & 0.508 & 0.469 & 0.515 & 0.546 & 0.362 & 0.254 & 0.138 & \textbf{0.608} \\
GestureMidAirD3 & \textbf{0.346} & 0.292 & 0.300 & 0.285 & 0.292 & 0.177 & 0.154 & 0.323 \\
GesturePebbleZ1 & 0.878 & \textbf{0.930} & 0.884 & 0.919 & 0.378 & 0.395 & 0.500 & 0.791 \\
GesturePebbleZ2 & 0.842 & 0.873 & 0.848 & \textbf{0.899} & 0.316 & 0.430 & 0.380 & 0.671 \\
GunPointAgeSpan & \textbf{0.994} & 0.987 & 0.968 & \textbf{0.994} & 0.984 & \textbf{0.994} & 0.991 & 0.918 \\
GunPointMaleVersusFemale & \textbf{1.000} & \textbf{1.000} & \textbf{1.000} & 0.997 & 0.994 & 0.997 & \textbf{1.000} & 0.997 \\
GunPointOldVersusYoung & \textbf{1.000} & \textbf{1.000} & \textbf{1.000} & \textbf{1.000} & \textbf{1.000} & \textbf{1.000} & \textbf{1.000} & 0.838 \\
HouseTwenty & \textbf{0.941} & 0.916 & \textbf{0.941} & 0.933 & 0.782 & 0.790 & 0.815 & 0.924 \\
InsectEPGRegularTrain & \textbf{1.000} & \textbf{1.000} & \textbf{1.000} & \textbf{1.000} & \textbf{1.000} & \textbf{1.000} & \textbf{1.000} & 0.872 \\
InsectEPGSmallTrain & \textbf{1.000} & \textbf{1.000} & \textbf{1.000} & \textbf{1.000} & \textbf{1.000} & \textbf{1.000} & \textbf{1.000} & 0.735 \\
MelbournePedestrian & 0.954 & \textbf{0.959} & 0.956 & 0.944 & 0.942 & 0.949 & 0.741 & 0.791 \\
MixedShapesRegularTrain & 0.915 & 0.917 & \textbf{0.922} & 0.905 & 0.911 & 0.855 & 0.879 & 0.842 \\
MixedShapesSmallTrain & \textbf{0.881} & 0.861 & 0.856 & 0.860 & 0.813 & 0.735 & 0.828 & 0.780 \\
PickupGestureWiimoteZ & 0.800 & \textbf{0.820} & 0.760 & 0.740 & 0.620 & 0.600 & 0.240 & 0.660 \\
PigAirwayPressure & 0.524 & 0.630 & \textbf{0.683} & 0.510 & 0.413 & 0.380 & 0.120 & 0.106 \\
PigArtPressure & 0.962 & \textbf{0.966} & \textbf{0.966} & 0.928 & 0.808 & 0.524 & 0.774 & 0.245 \\
PigCVP & 0.803 & 0.812 & \textbf{0.870} & 0.788 & 0.649 & 0.615 & 0.596 & 0.154 \\
PLAID & 0.551 & 0.561 & 0.549 & 0.555 & 0.495 & 0.445 & 0.419 & \textbf{0.840} \\
PowerCons & 0.967 & 0.961 & \textbf{0.972} & 0.900 & 0.933 & 0.961 & 0.911 & 0.878 \\
Rock & 0.660 & \textbf{0.700} & \textbf{0.700} & 0.580 & 0.580 & 0.600 & 0.680 & 0.600 \\
SemgHandGenderCh2 & 0.952 & \textbf{0.963} & 0.962 & 0.890 & 0.882 & 0.837 & 0.725 & 0.802 \\
SemgHandMovementCh2 & \textbf{0.893} & 0.860 & 0.891 & 0.789 & 0.593 & 0.613 & 0.420 & 0.584 \\
SemgHandSubjectCh2 & 0.944 & \textbf{0.951} & 0.942 & 0.853 & 0.771 & 0.753 & 0.484 & 0.727 \\
ShakeGestureWiimoteZ & \textbf{0.940} & \textbf{0.940} & 0.920 & 0.920 & 0.820 & 0.860 & 0.760 & 0.860 \\
SmoothSubspace & 0.967 & 0.980 & \textbf{0.993} & 0.960 & 0.913 & 0.953 & 0.827 & 0.827 \\
UMD & \textbf{1.000} & \textbf{1.000} & 0.993 & 0.993 & 0.993 & 0.986 & 0.910 & 0.993 \\
DodgerLoopDay & 0.425 & \textbf{0.562} & 0.500 & -- & -- & -- & 0.200 & 0.500 \\
DodgerLoopGame & 0.826 & 0.841 & 0.819 & -- & -- & -- & 0.696 & \textbf{0.877} \\
DodgerLoopWeekend & 0.942 & \textbf{0.964} & 0.942 & -- & -- & -- & 0.732 & 0.949 \\
    \midrule
    On the first 125 datasets: \\
    AVG & 0.824 & \textbf{0.830} & 0.827 & 0.806 & 0.761 & 0.757 & 0.641 & 0.727 \\
    Rank & 3.048 & \textbf{2.688} & 2.820 & 4.160 & 5.136 & 4.936 & 7.060 & 6.152 \\
    \bottomrule
  \caption{Accuracy scores of our method compared with those of other methods of unsupervised representation on 128 UCR datasets. The representation dimensions of TS2Vec, T-Loss, TNC, TS-TCC and TST are all set to 320 for fair comparison.}\label{full-ucr-unsup}
\end{longtable}

\normalsize

\relsize{-1}
\begin{longtable}[t]{lccccccccc}
  \toprule
    Dataset & TS2Vec & T-Loss & TNC & TS-TCC & TST & DTW \\
    \midrule
    \endhead
ArticularyWordRecognition & \textbf{0.987} & 0.943 & 0.973 & 0.953 & 0.977 & \textbf{0.987} \\
AtrialFibrillation & 0.200 & 0.133 & 0.133 & \textbf{0.267} & 0.067 & 0.200 \\
BasicMotions & 0.975 & \textbf{1.000} & 0.975 & \textbf{1.000} & 0.975 & 0.975 \\
CharacterTrajectories & \textbf{0.995} & 0.993 & 0.967 & 0.985 & 0.975 & 0.989 \\
Cricket & 0.972 & 0.972 & 0.958 & 0.917 & \textbf{1.000} & \textbf{1.000} \\
DuckDuckGeese & \textbf{0.680} & 0.650 & 0.460 & 0.380 & 0.620 & 0.600 \\
EigenWorms & \textbf{0.847} & 0.840 & 0.840 & 0.779 & 0.748 & 0.618 \\
Epilepsy & 0.964 & \textbf{0.971} & 0.957 & 0.957 & 0.949 & 0.964 \\
ERing & 0.874 & 0.133 & 0.852 & \textbf{0.904} & 0.874 & 0.133 \\
EthanolConcentration & 0.308 & 0.205 & 0.297 & 0.285 & 0.262 & \textbf{0.323} \\
FaceDetection & 0.501 & 0.513 & 0.536 & \textbf{0.544} & 0.534 & 0.529 \\
FingerMovements & 0.480 & \textbf{0.580} & 0.470 & 0.460 & 0.560 & 0.530 \\
HandMovementDirection & 0.338 & \textbf{0.351} & 0.324 & 0.243 & 0.243 & 0.231 \\
Handwriting & \textbf{0.515} & 0.451 & 0.249 & 0.498 & 0.225 & 0.286 \\
Heartbeat & 0.683 & 0.741 & 0.746 & \textbf{0.751} & 0.746 & 0.717 \\
JapaneseVowels & 0.984 & \textbf{0.989} & 0.978 & 0.930 & 0.978 & 0.949 \\
Libras & 0.867 & \textbf{0.883} & 0.817 & 0.822 & 0.656 & 0.870 \\
LSST & 0.537 & 0.509 & \textbf{0.595} & 0.474 & 0.408 & 0.551 \\
MotorImagery & 0.510 & 0.580 & 0.500 & \textbf{0.610} & 0.500 & 0.500 \\
NATOPS & \textbf{0.928} & 0.917 & 0.911 & 0.822 & 0.850 & 0.883 \\
PEMS-SF & 0.682 & 0.676 & 0.699 & 0.734 & \textbf{0.740} & 0.711 \\
PenDigits & \textbf{0.989} & 0.981 & 0.979 & 0.974 & 0.560 & 0.977 \\
PhonemeSpectra & 0.233 & 0.222 & 0.207 & \textbf{0.252} & 0.085 & 0.151 \\
RacketSports & \textbf{0.855} & \textbf{0.855} & 0.776 & 0.816 & 0.809 & 0.803 \\
SelfRegulationSCP1 & 0.812 & \textbf{0.843} & 0.799 & 0.823 & 0.754 & 0.775 \\
SelfRegulationSCP2 & \textbf{0.578} & 0.539 & 0.550 & 0.533 & 0.550 & 0.539 \\
SpokenArabicDigits & \textbf{0.988} & 0.905 & 0.934 & 0.970 & 0.923 & 0.963 \\
StandWalkJump & \textbf{0.467} & 0.333 & 0.400 & 0.333 & 0.267 & 0.200 \\
UWaveGestureLibrary & \textbf{0.906} & 0.875 & 0.759 & 0.753 & 0.575 & 0.903 \\
InsectWingbeat & 0.466 & 0.156 & \textbf{0.469} & 0.264 & 0.105 & -- \\
    \midrule
    On the first 29 datasets: \\
    AVG & \textbf{0.712} & 0.675 & 0.677 & 0.682 & 0.635 & 0.650 \\
    Rank & \textbf{2.397} & 3.121 & 3.845 & 3.534 & 4.362 & 3.741 \\
    \bottomrule
  \caption{Accuracy scores of our method compared with those of other methods of unsupervised representation on 30 UEA datasets. The representation dimensions of TS2Vec, T-Loss, TNC, TS-TCC and TST are all set to 320 for fair comparison.}\label{full-uea-unsup}
\end{longtable}

\normalsize

\clearpage

\relsize{-1}
\begin{longtable}[t]{lcccc}
  \toprule
    Dataset & TS2Vec & TS2Vec$^\dag$ & T-Loss$^\dag$ & T-Loss-4X \\
    \midrule
    \endhead
Adiac & 0.762 & \textbf{0.783} & 0.760 & 0.716 \\
ArrowHead & \textbf{0.857} & 0.829 & 0.817 & 0.829 \\
Beef & \textbf{0.767} & 0.700 & 0.667 & 0.700 \\
BeetleFly & \textbf{0.900} & \textbf{0.900} & 0.800 & \textbf{0.900} \\
BirdChicken & 0.800 & 0.800 & \textbf{0.900} & 0.800 \\
Car & 0.833 & 0.817 & \textbf{0.850} & 0.817 \\
CBF & \textbf{1.000} & \textbf{1.000} & 0.988 & 0.994 \\
ChlorineConcentration & \textbf{0.832} & 0.802 & 0.688 & 0.782 \\
CinCECGTorso & \textbf{0.827} & 0.738 & 0.638 & 0.740 \\
Coffee & \textbf{1.000} & \textbf{1.000} & \textbf{1.000} & \textbf{1.000} \\
Computers & \textbf{0.660} & \textbf{0.660} & 0.648 & 0.628 \\
CricketX & \textbf{0.782} & 0.767 & 0.682 & 0.777 \\
CricketY & 0.749 & 0.746 & 0.667 & \textbf{0.767} \\
CricketZ & \textbf{0.792} & 0.772 & 0.656 & 0.764 \\
DiatomSizeReduction & 0.984 & 0.961 & 0.974 & \textbf{0.993} \\
DistalPhalanxOutlineCorrect & 0.761 & 0.757 & 0.764 & \textbf{0.768} \\
DistalPhalanxOutlineAgeGroup & 0.727 & \textbf{0.748} & 0.727 & 0.734 \\
DistalPhalanxTW & \textbf{0.698} & 0.669 & 0.669 & 0.676 \\
Earthquakes & \textbf{0.748} & \textbf{0.748} & \textbf{0.748} & \textbf{0.748} \\
ECG200 & \textbf{0.920} & 0.910 & 0.830 & 0.900 \\
ECG5000 & 0.935 & 0.935 & \textbf{0.940} & 0.936 \\
ECGFiveDays & \textbf{1.000} & \textbf{1.000} & \textbf{1.000} & \textbf{1.000} \\
ElectricDevices & 0.721 & 0.714 & 0.676 & \textbf{0.732} \\
FaceAll & 0.771 & 0.786 & 0.734 & \textbf{0.802} \\
FaceFour & \textbf{0.932} & 0.898 & 0.830 & 0.875 \\
FacesUCR & 0.924 & \textbf{0.928} & 0.835 & 0.918 \\
FiftyWords & 0.771 & \textbf{0.785} & 0.745 & 0.780 \\
Fish & 0.926 & 0.949 & \textbf{0.960} & 0.880 \\
FordA & \textbf{0.936} & \textbf{0.936} & 0.927 & 0.935 \\
FordB & 0.794 & 0.779 & 0.798 & \textbf{0.810} \\
GunPoint & 0.980 & \textbf{0.993} & 0.987 & \textbf{0.993} \\
Ham & \textbf{0.714} & \textbf{0.714} & 0.533 & 0.695 \\
HandOutlines & \textbf{0.922} & 0.919 & 0.919 & \textbf{0.922} \\
Haptics & \textbf{0.526} & \textbf{0.526} & 0.474 & 0.455 \\
Herring & \textbf{0.641} & 0.594 & 0.578 & 0.578 \\
InlineSkate & 0.415 & \textbf{0.465} & 0.444 & 0.447 \\
InsectWingbeatSound & \textbf{0.630} & 0.603 & 0.599 & 0.623 \\
ItalyPowerDemand & 0.925 & \textbf{0.957} & 0.929 & 0.925 \\
LargeKitchenAppliances & 0.845 & \textbf{0.861} & 0.765 & 0.848 \\
Lightning2 & 0.869 & \textbf{0.918} & 0.787 & \textbf{0.918} \\
Lightning7 & \textbf{0.863} & 0.781 & 0.740 & 0.795 \\
Mallat & 0.914 & 0.956 & 0.916 & \textbf{0.964} \\
Meat & 0.950 & \textbf{0.967} & 0.867 & 0.950 \\
MedicalImages & \textbf{0.789} & 0.784 & 0.725 & 0.784 \\
MiddlePhalanxOutlineCorrect & \textbf{0.838} & 0.794 & 0.787 & 0.814 \\
MiddlePhalanxOutlineAgeGroup & 0.636 & 0.649 & 0.623 & \textbf{0.656} \\
MiddlePhalanxTW & 0.584 & 0.597 & 0.584 & \textbf{0.610} \\
MoteStrain & 0.861 & 0.847 & 0.823 & \textbf{0.871} \\
NonInvasiveFetalECGThorax1 & 0.930 & \textbf{0.946} & 0.925 & 0.910 \\
NonInvasiveFetalECGThorax2 & 0.938 & \textbf{0.955} & 0.930 & 0.927 \\
OliveOil & \textbf{0.900} & \textbf{0.900} & \textbf{0.900} & \textbf{0.900} \\
OSULeaf & 0.851 & \textbf{0.868} & 0.736 & 0.831 \\
PhalangesOutlinesCorrect & \textbf{0.809} & 0.794 & 0.784 & 0.801 \\
Phoneme & \textbf{0.312} & 0.260 & 0.196 & 0.289 \\
Plane & \textbf{1.000} & 0.981 & 0.981 & 0.990 \\
ProximalPhalanxOutlineCorrect & \textbf{0.887} & 0.876 & 0.869 & 0.859 \\
ProximalPhalanxOutlineAgeGroup & 0.834 & 0.844 & 0.839 & \textbf{0.854} \\
ProximalPhalanxTW & \textbf{0.824} & 0.805 & 0.785 & \textbf{0.824} \\
RefrigerationDevices & \textbf{0.589} & 0.557 & 0.555 & 0.517 \\
ScreenType & 0.411 & \textbf{0.421} & 0.384 & 0.413 \\
ShapeletSim & \textbf{1.000} & \textbf{1.000} & 0.517 & 0.817 \\
ShapesAll & \textbf{0.902} & 0.877 & 0.837 & 0.875 \\
SmallKitchenAppliances & 0.731 & \textbf{0.747} & 0.731 & 0.715 \\
SonyAIBORobotSurface1 & \textbf{0.903} & 0.884 & 0.840 & 0.897 \\
SonyAIBORobotSurface2 & 0.871 & 0.872 & 0.832 & \textbf{0.934} \\
StarLightCurves & \textbf{0.969} & 0.967 & 0.968 & 0.965 \\
Strawberry & \textbf{0.962} & \textbf{0.962} & 0.946 & 0.946 \\
SwedishLeaf & \textbf{0.941} & 0.931 & 0.925 & 0.931 \\
Symbols & \textbf{0.976} & 0.973 & 0.945 & 0.965 \\
SyntheticControl & \textbf{0.997} & \textbf{0.997} & 0.977 & 0.983 \\
ToeSegmentation1 & 0.917 & 0.947 & 0.899 & \textbf{0.952} \\
ToeSegmentation2 & 0.892 & \textbf{0.946} & 0.900 & 0.885 \\
Trace & \textbf{1.000} & \textbf{1.000} & \textbf{1.000} & \textbf{1.000} \\
TwoLeadECG & 0.986 & \textbf{0.999} & 0.993 & 0.997 \\
TwoPatterns & \textbf{1.000} & 0.999 & 0.992 & \textbf{1.000} \\
UWaveGestureLibraryX & 0.795 & \textbf{0.818} & 0.784 & 0.811 \\
UWaveGestureLibraryY & 0.719 & \textbf{0.739} & 0.697 & 0.735 \\
UWaveGestureLibraryZ & \textbf{0.770} & 0.757 & 0.729 & 0.759 \\
UWaveGestureLibraryAll & 0.930 & 0.918 & 0.865 & \textbf{0.941} \\
Wafer & \textbf{0.998} & 0.997 & 0.995 & 0.993 \\
Wine & \textbf{0.870} & 0.759 & 0.685 & \textbf{0.870} \\
WordSynonyms & 0.676 & 0.693 & 0.641 & \textbf{0.704} \\
Worms & 0.701 & \textbf{0.753} & 0.688 & 0.714 \\
WormsTwoClass & 0.805 & 0.688 & 0.753 & \textbf{0.818} \\
Yoga & \textbf{0.887} & 0.855 & 0.828 & 0.878 \\
\midrule
    AVG & \textbf{0.829} & 0.824 & 0.786 & 0.821 \\
    Rank & \textbf{2.041} & 2.188 & 3.417 & 2.352 \\
    \bottomrule
  \caption{Accuracy scores of our method compared with those of T-Loss-4X, T-Loss$^\dag$ on the first 85 UCR datasets.}\label{full-ucr-other}
\end{longtable}

\normalsize

\twocolumn

\end{document}